\documentclass{article}
\usepackage{times}
\usepackage{graphicx}
\usepackage{amsmath,amssymb,amsbsy,amsfonts,amsthm}
\usepackage{tikz}
\usetikzlibrary{calc,math,shapes,patterns,arrows,decorations.markings}
\usepackage{biblatex}
\AtEveryBibitem{
	\clearlist{language}
	\clearfield{doi}
	\clearfield{issn}
	\clearfield{isbn}
	\clearfield{urlyear}
	\ifentrytype{misc}{}{\clearfield{url}}
}
\addbibresource{biblio.bib}
\usepackage[text={17.5cm,25cm},centering]{geometry}
\setlength{\parindent}{0cm}

\setcounter{totalnumber}{99}
\setcounter{topnumber}{99}
\setcounter{bottomnumber}{99}

\newcommand{\dd}{\mathbf{d}}

\newcommand{\expectation}{\mathbb{E}}
\newcommand{\carac}[1]{\mathbb{I}[{#1}]}
\newcommand{\mbeta}{\mathbf{Boojum}}
\newcommand\inner[2]{\langle{#1},{#2}\rangle}

\newtheorem{definition}{Definition}
\newtheorem{lemma}{Lemma}
\newtheorem{theorem}{Theorem}

\title{A conjugate prior for the Dirichlet distribution}
\author{Jean-Marc Andreoli\\Naverlabs Europe}
\date{October 2014; last update October 2018}
\begin{document}
\maketitle
\begin{abstract}
This note investigates a conjugate class for the Dirichlet distribution class in the exponential family.
\end{abstract}
\section{Basic definitions}
The exponential family of distributions is characterised by the existence of a systematic procedure to produce priors within that same family (see, e.g., Proposition 3.3.13 in~\cite{robert_bayesian_2007}). For example, the class of Dirichlet distributions can be obtained by application of that procedure to the class of multinomial distributions, an essential class in the exponential family (see, e.g., Theorem 4.3 in~\cite{sethuraman_constructive_1994}, and the application of Dirichlet priors to document clustering in~\cite{blei_latent_2003}). Now, applying the same procedure to the class of Dirichlet distributions itself, we obtain a new class of distributions, still in the exponential family, which we denote here as $\mbeta$ for lack of a better name.
\begin{definition}
The $\mbeta(m,r)$ distribution, with parameter $m\in\mathbb{R}$ (shape) and $r\in\mathbb{R}^K$ (rate vector), is defined for $x\in\mathbb{R}_+^K$ by
\begin{eqnarray*}
\mbeta(x;m,r) & \triangleq & \frac{1}{Z(m,r)}\mathcal{B}(x)^{-m}\exp-\sum_kr_kx_k\\
Z(m,r) & \triangleq & \int_x \mathcal{B}(x)^{-m}\exp-\sum_kr_kx_k\dd{x}
\end{eqnarray*}
where $\mathcal{B}$ denotes the muti-variate Beta function $\mathcal{B}(x)\triangleq\frac{\prod_k\Gamma(x_k)}{\Gamma(\sum_kx_k)}$, which is also the normalising constant of the Dirichlet distribution of parameter $x$.
\end{definition}
Alternatively, the distribution can be defined over $\mathbb{R}_+{\times}\mathcal{T}_K$ where $\mathcal{T}_K\triangleq\{t\in\mathbb{R}_+^K|\sum_{k\in K}t_k=1\}$ is the $\mathbb{R}_+^K$-simplex. We define the homeomorphism from $x\in\mathbb{R}_+^K$ into $s,t\in\mathbb{R}_+{\times}\mathcal{T}_K$ as follows:
\[
\begin{array}{l@{\hspace{1cm}}l@{\hspace{1cm}}l}
\text{direct:} & s = \sum_kx_k & \forall k\;t_k=\frac{x_k}{s}\\
\text{reverse:} & \multicolumn{2}{c}{\forall k\;x_k=st_k}
\end{array}
\]
The computation of the Jacobian yields $\dd{x}=s^{K-1}\dd{s}\dd{t}$, where $\dd{t}$ denotes the measure on $\mathcal{T}_K$ obtained by projection of the simplex along any arbitrarily chosen axis. Hence the distribution in the alternate space $\mathbb{R}_+\times\mathcal{T}_K$ is given by
\begin{eqnarray*}
p(t|s) & = & \frac{1}{\bar{Z}(s)}\mathcal{B}(st)^{-m}\exp-s\inner{r}{t}
\hspace{2cm}
\bar{Z}(s) \triangleq \int_{t\in\mathcal{T}}\mathcal{B}(st)^{-m}\exp-s\inner{r}{t}\dd{t} \\
p(s) & = & \frac{1}{Z(m,r)}\bar{Z}(s)s^{K-1}
\hspace{2cm}
Z(m,r) = \int_s \bar{Z}(s)s^{K-1}\dd{s}
\end{eqnarray*}
Here, $\inner{.}{.}$ denotes the scalar product in $\mathbb{R}^K$. To be strict, we should write $\bar{Z}(s;m,r)$ for the normalising constant of the conditional, but we omit the dependency on $m,r$ for the sake of clarity.
\section{Study of convergence}
We seek to determine the values of the parameter $m,r$ which lead to a proper distribution, i.e. where the normalising constant $Z(m,r)$ is finite. The main result, expressed by the following theorem, is informally summarised in Figure~\ref{fig:convergence} when $K{=}2$.
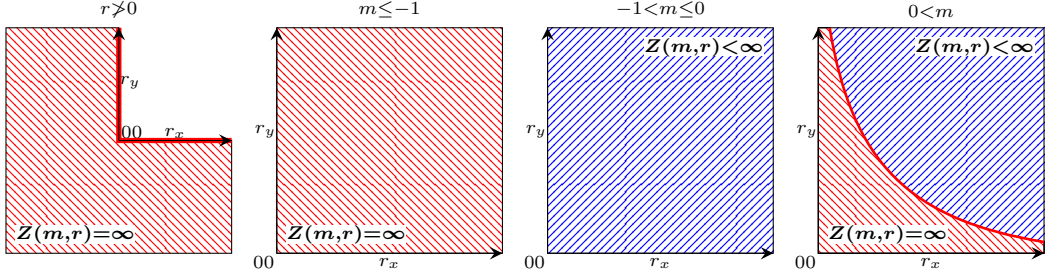
\begin{figure}
\begin{center}
\begin{tikzpicture}[scale=3]
\begin{scope}[
myptr/.style={decoration={markings,mark=at position 1 with {\arrow[scale=1.5,>=stealth]{>}}},postaction={decorate}}
]
\node[label={[above,inner sep=0pt]:$\scriptstyle{r\not>0}$}] at (.5,1) {};
\node[label={[inner sep=0pt]45:$\scriptstyle{00}$},inner sep=0pt] at (.5,.5) {};
\node[label={[above,inner sep=0pt]:$\scriptstyle{r_x}$},inner sep=0pt] at (.75,.5) {};
\node[label={[right,inner sep=0pt]:$\scriptstyle{r_y}$},inner sep=0pt] at (.5,.75) {};
\draw[pattern=north west lines,pattern color=red] (0,0) -- (1,0) -- (1,.5) -- (.5,.5) -- (.5,1) -- (0,1) -- cycle;
\node[label={[inner sep=0pt,fill=white]45:$\boldsymbol{\scriptstyle{Z(m,r)=\infty}}$}] at (0,0) {};
\draw [red,line width=2pt] (.5,1) -- (.5,.5) -- (1,.5);
\draw [myptr] (.5,.5) -- (1,.5);\draw [myptr] (.5,.5) -- (.5,1);
\begin{scope}[shift={(1.2,0)}]
\node[label={[above,inner sep=0pt]:$\scriptstyle{m\leq-1}$}] at (.5,1) {};
\node[label={[inner sep=0pt]225:$\scriptstyle{00}$},inner sep=0pt] at (0,0) {};
\node[label={[below,inner sep=5pt]:$\scriptstyle{r_x}$}] at (.5,0) {};
\node[label={[left,inner sep=0pt]:$\scriptstyle{r_y}$}] at (0,.5) {};
\draw[pattern=north west lines,pattern color=red] (0,0) rectangle (1,1);
\node[label={[inner sep=0pt,fill=white]45:$\boldsymbol{\scriptstyle{Z(m,r)=\infty}}$}] at (0,0) {};
\draw [myptr] (0,0) -- (1,0);\draw [myptr] (0,0) -- (0,1);
\end{scope}
\begin{scope}[shift={(2.4,0)}]
\node[label={[above,inner sep=0pt]:$\scriptstyle{-1<m\leq0}$}] at (.5,1) {};
\node[label={[inner sep=0pt]225:$\scriptstyle{00}$},inner sep=0pt] at (0,0) {};
\node[label={[below,inner sep=5pt]:$\scriptstyle{r_x}$}] at (.5,0) {};
\node[label={[left,inner sep=0pt]:$\scriptstyle{r_y}$}] at (0,.5) {};
\draw[pattern=north east lines,pattern color=blue] (0,0) rectangle (1,1);
\node[label={[inner sep=0pt,fill=white]225:$\boldsymbol{\scriptstyle{Z(m,r)<\infty}}$}] at (1,1) {};
\draw [myptr] (0,0) -- (1,0);\draw [myptr] (0,0) -- (0,1);
\end{scope}
\begin{scope}[shift={(3.6,0)}]
\node[label={[above,inner sep=0pt]:$\scriptstyle{0<m}$}] at (.5,1) {};
\node[label={[inner sep=0pt]225:$\scriptstyle{00}$},inner sep=0pt] at (0,0) {};
\node[label={[below,inner sep=5pt]:$\scriptstyle{r_x}$}] at (.5,0) {};
\node[label={[left,inner sep=0pt]:$\scriptstyle{r_y}$}] at (0,.5) {};
\draw[pattern=north east lines,pattern color=blue] (0,0) rectangle (1,1);
\draw[fill=white] (0,0) -- (0,1) -- (.05,1) .. controls (.15,.4) and (.4,.15) .. (1,.05) -- (1,0) -- cycle;
\draw[pattern=north west lines,pattern color=red] (0,0) -- (0,1) -- (.05,1) .. controls (.15,.4) and (.4,.15) .. (1,.05) -- (1,0) -- cycle;
\draw[red,line width=1pt] (.05,1) .. controls (.15,.4) and (.4,.15) .. (1,.05);
\node[label={[inner sep=0pt,fill=white]45:$\boldsymbol{\scriptstyle{Z(m,r)=\infty}}$}] at (0,0) {};
\node[label={[inner sep=0pt,fill=white]225:$\boldsymbol{\scriptstyle{Z(m,r)<\infty}}$}] at (1,1) {};
\draw [myptr] (0,0) -- (1,0);\draw [myptr] (0,0) -- (0,1);
\end{scope}
\end{scope}
\end{tikzpicture}
\end{center}
\caption{\label{fig:convergence}Summary of the convergence results. When $0<m$, the boundary corresponds to $\sum_k\exp-\frac{r_k}{m}=1$.}
\end{figure}
\begin{theorem}
\label{thm:properness}
Let $m\in\mathbb{R}$ and $r\in\mathbb{R}^K$. The distribution $\mbeta(m,r)$ is proper, i.e. $Z(m,r)$ is finite, if and only if
\[
\forall k\in K\;r_k>0
\hspace{.5cm}\text{and}\hspace{.5cm}
m>-1
\hspace{.5cm}\text{and}\hspace{.5cm}
\left(m\leq0\hspace{.3cm}\text{or}\hspace{.3cm}\sum_k\exp-\frac{r_k}{m}<1\right)
\]
\end{theorem}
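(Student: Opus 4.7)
The plan is to exploit the $(s,t)$ reparametrisation introduced above and analyze the integrand in the two asymptotic regimes $s\to 0^{+}$ and $s\to\infty$ separately, since $\mathcal{B}(st)$ behaves very differently in each. The condition $m>-1$ and the finiteness of the $t$-integral over the simplex will come from the small-$s$ regime, while the condition on $r$ together with, when $m>0$, the exponential inequality $\sum_k\exp(-r_k/m)<1$ will come from the large-$s$ regime.

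For small $s$, the relation $\Gamma(a)=\Gamma(a+1)/a\sim 1/a$ as $a\to 0^{+}$ gives $\mathcal{B}(st)\sim(s^{K-1}\prod_k t_k)^{-1}$ on the interior of the simplex, so $\mathcal{B}(st)^{-m}\sim s^{m(K-1)}\prod_k t_k^{m}$. Hence $\bar{Z}(s)$ is finite iff $\int_{\mathcal{T}_K}\prod_k t_k^{m}\,\dd{t}<\infty$, which holds exactly when $m>-1$; when $m\leq-1$ we have $\bar{Z}(s)=\infty$ for every $s>0$, and therefore $Z(m,r)=\infty$ regardless of $r$. Under $m>-1$, the resulting behaviour $s^{K-1}\bar{Z}(s)\sim s^{(K-1)(m+1)}\times\mathrm{const}$ is integrable near $0$.

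For large $s$, I will apply Stirling's formula $\log\Gamma(a)=(a-\tfrac12)\log a-a+\tfrac12\log 2\pi+O(1/a)$ to each $\Gamma(st_k)$ and to $\Gamma(s)$. After cancellation of the $\pm s$ terms using $\sum_k t_k=1$, this produces $\log\mathcal{B}(st)=-s\entropy(t)+\tfrac{1-K}{2}\log s-\tfrac12\sum_k\log t_k+\mathrm{const}+O(1/s)$, where $\entropy(t)=-\sum_k t_k\log t_k$ is the entropy. Plugging this back into the $(s,t)$ integrand, all factors other than $\exp(-s\Phi(t))$ with $\Phi(t)\triangleq\inner{r}{t}-m\entropy(t)$ are subexponential in $s$ and integrable in $t$. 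Granted a Stirling bound uniform in $t$, convergence at $s\to\infty$ then reduces to $\phi^{\star}\triangleq\min_{t\in\mathcal{T}_K}\Phi(t)>0$: if $\phi^{\star}>0$ the integrand decays like $\exp(-s\phi^{\star})$ uniformly in $t$; if $\phi^{\star}\leq 0$ a neighbourhood of the minimiser of $\Phi$ produces a contribution that is not integrable at $s=\infty$.

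It remains to evaluate $\phi^{\star}$. When $m\leq 0$, $\Phi$ is the sum of a linear function and the concave $-m\entropy$, hence concave, so its minimum over the compact convex simplex is attained at a vertex $e_k$, where $\Phi(e_k)=r_k$; thus $\phi^{\star}>0\Leftrightarrow r_k>0\;\forall k$. When $m>0$, $\Phi$ is strictly convex and a standard Lagrange-multiplier computation under $\sum_k t_k=1$ yields the interior minimiser $t^{\star}_k=\exp(-r_k/m)/\sum_j\exp(-r_j/m)$ and $\phi^{\star}=-m\log\sum_j\exp(-r_j/m)$, so positivity becomes $\sum_j\exp(-r_j/m)<1$, an inequality which in turn forces each $r_k>0$. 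Combining with the small-$s$ requirement $m>-1$ reproduces exactly the three clauses of the theorem. The main obstacles I anticipate are producing the uniform-in-$t$ version of Stirling (so that the Laplace-type reduction remains valid as $t$ approaches the boundary of the simplex) and treating the borderline cases where some $r_k=0$, or where $\sum_k\exp(-r_k/m)=1$ when $m>0$, each of which must be shown to lie on the divergent side by a finer asymptotic analysis of the integrand near the corresponding minimiser of $\Phi$.
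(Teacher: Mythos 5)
Your skeleton is essentially the paper's for the decisive case $m>0$: after Stirling, the $s$-linear rate is $\Phi(t)=\inner{r}{t}-m\entropy(t)$, whose interior minimiser $\tau_k\propto\exp(-r_k/m)$ gives $\phi^\star=-m\log T$ with $T=\sum_k\exp(-r_k/m)$; the paper writes the same thing as $J(t)+\inner{r}{t}/m=D(t)-\log T$ with $D$ the KL divergence to $\tau$ (Lemma~\ref{lem:m-positive-T-not1}), and your small-$s$ analysis reproduces Lemmas~\ref{lem:z-bar} and~\ref{lem:m-negative-above_1}. The genuine gap is that the two items you defer as ``anticipated obstacles'' are exactly where the proof lives, and one of your claims is false as stated: ``if $\phi^\star\leq0$ a neighbourhood of the minimiser produces a contribution that is not integrable at $s=\infty$'' does not follow when $\phi^\star=0$ (i.e. $m>0$, $T=1$). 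On any fixed region where $\Phi\leq\epsilon$ you only get a finite lower bound of order $\mathrm{vol}\{\Phi\leq\epsilon\}\cdot\epsilon^{-(M+1)}$ with $M=(1+\frac m2)(K-1)$, and divergence requires showing this blows up as $\epsilon\to0$, balancing the shrinking volume (order $\epsilon^{(K-1)/2}$, since $\Phi$ is quadratic at its interior minimum) against $\epsilon^{-(M+1)}$, \emph{and} keeping the Stirling error bounded below uniformly on the shrinking region. That is precisely the content of Lemma~\ref{lem:m-positive-T-1}, with the subsimplices $\mathcal{T}_\alpha$, the estimates $\mu_\alpha\sim c(1-\alpha)^2$ and $\mathcal{A}_\alpha=(1-\alpha)^{K-1}\mathcal{A}_0$, and the lower bound on $H(s,t)$ valid only on $\mathcal{T}_\alpha$; without this computation the borderline case of the theorem is simply not proved.

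The second problematic spot is the divergence claim for $-1<m\leq0$ with $\min_k r_k=0$ (again $\phi^\star=0$, now with the minimiser at a \emph{vertex} of $\mathcal{T}$, exactly where the correction terms $\frac12\sum_k\log t_k$ and the Stirling errors $h(st_k)$ are unbounded, so even the quantitative Laplace balance above is not available as such). The paper sidesteps this entirely: Lemma~\ref{lem:m-below_1-or-r-nonpositive} disposes of all $r\not>0$ cases by an elementary one-dimensional argument in the original $x$-space (lower bounding $(\Gamma(x)/\Gamma(z+x))^{-m}\geq x^m$ and noting $\int_2^\infty x^m\,\dd{x}=\infty$), and Lemma~\ref{lem:z-restricted} uses a monotonicity argument (via the digamma function) to settle one side of the $s$-integral for each sign of $m$, so that Stirling is needed only for $m>0$, $s\to\infty$, where the minimiser is interior. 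If you keep your route, you must either prove the vertex-case divergence with uniform control near the boundary of the simplex, or borrow the paper's elementary reduction; and in either case the $T=1$ analysis must be carried out, not just flagged. Your $r_k<0$ and $T>1$ cases, by contrast, do go through as you describe, since there the minimiser can be replaced by a fixed interior point with $\Phi<0$ on which the approximation is uniform.
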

The rest of this section is devoted to the proof of this result, which summarises Lemmas~\ref{lem:m-below_1-or-r-nonpositive}, \ref{lem:m-zero}, \ref{lem:m-negative-above_1}, \ref{lem:m-positive-T-not1}, \ref{lem:m-positive-T-1}.
\begin{lemma}
\label{lem:z-bar}
For any given $s>0$, the integral $\bar{Z}(s)$ is finite if and only if $m>-1$.
\end{lemma}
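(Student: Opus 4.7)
The plan is to reduce $\bar{Z}(s)$ to a classical Dirichlet integral on the simplex, so that the $m>-1$ threshold emerges transparently.

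\emph{Step 1: reduction to the Beta factor.} For fixed $s>0$ and $r$, the exponential $\exp-s\inner{r}{t}$ is continuous on the compact simplex $\mathcal{T}_K$, hence bounded above and below by strictly positive constants. Therefore $\bar Z(s)$ is finite if and only if $I(s)\triangleq\int_{\mathcal{T}_K}\mathcal{B}(st)^{-m}\dd{t}$ is finite, and the only possible source of divergence is the behaviour of $\mathcal{B}(st)^{-m}$ near the boundary $\partial\mathcal{T}_K$ where some coordinate $t_k$ vanishes.

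\emph{Step 2: isolating the singular factor.} Using $\Gamma(z)=\Gamma(z+1)/z$, I would rewrite
\[
\mathcal{B}(st) \;=\; \frac{\prod_k\Gamma(1+st_k)}{s^K\,\Gamma(s)}\,\prod_k t_k^{-1}.
\]
The key observation is that the prefactor $\prod_k\Gamma(1+st_k)/(s^K\Gamma(s))$ is continuous and strictly positive on the whole compact simplex, including its boundary, since the gamma singularities at $0$ have been absorbed. Hence it lies between two positive constants depending only on $s$ and $K$. Raising to the power $-m$ gives constants $C_1,C_2>0$ such that $C_1\prod_k t_k^m \leq \mathcal{B}(st)^{-m} \leq C_2\prod_k t_k^m$ uniformly in $t\in\mathcal{T}_K$.

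\emph{Step 3: conclusion via the Dirichlet integral.} Finiteness of $I(s)$ is thus equivalent to finiteness of $\int_{\mathcal{T}_K}\prod_k t_k^m\,\dd{t}$, which is the classical Dirichlet normaliser, equal to $\Gamma(m+1)^K/\Gamma(K(m+1))$ whenever $m>-1$. For $m\leq-1$, divergence follows by Fubini: holding all but one coordinate fixed in a compact subset of an open face, the remaining one-dimensional integral $\int_0^\epsilon t_k^m\,\dd{t_k}$ already diverges. Combining this with Step~1 yields the claim.

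The main technical point will be obtaining the comparison of $\mathcal{B}(st)^{-m}$ with $\prod_k t_k^m$ \emph{uniformly} over the whole simplex rather than merely near a single face; this is precisely why I prefer the global factorisation $\Gamma(st_k)=\Gamma(1+st_k)/(st_k)$ over pointwise asymptotics, as it turns the awkward multi-face boundary analysis into a single continuity argument on a compact set.
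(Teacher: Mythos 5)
Your proposal is correct and follows essentially the same route as the paper: both extract $\prod_k t_k^m$ via $\Gamma(st_k)=\Gamma(1+st_k)/(st_k)$, bound the remaining continuous factor above and below by positive constants on the compact simplex, and reduce finiteness of $\bar{Z}(s)$ to that of the balanced Dirichlet normaliser $\int_{\mathcal{T}}\prod_k t_k^m\,\dd{t}$, finite iff $m>-1$. The only differences are cosmetic: you peel off the exponential factor first and spell out the $m\leq-1$ divergence by a one-coordinate Fubini argument, where the paper simply invokes properness of the Dirichlet distribution.
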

\begin{proof}
For a given $s$, we have, using the definition of $\bar{Z}$ and the property $\Gamma(x){=}\frac{1}{x}\Gamma(1+x)$
\[
\bar{Z}(s) = \int_{t\in\mathcal{T}}\left(\frac{\prod_k\Gamma(st_k)}{\Gamma(s)}\right)^{-m}\exp-s\inner{r}{t}\dd{t}
= \int_{t\in\mathcal{T}}\prod_kt_k^m\underline{\left(\frac{\prod_k\Gamma(1+st_k)}{s^K\Gamma(s)}\right)^{-m}\exp-s\inner{r}{t}}\dd{t}
\]
The underlined term is a continuous function of $t$ over the compact $\mathcal{T}$, hence both lower and upper bounded. Hence, for some strictly positive constants $u_s^\bot,u_s^\top$ which depend on $s$, we have
\[
u_s^\bot\int_{t\in\mathcal{T}}\prod_kt_k^m\dd{t}\leq \bar{Z}(s)\leq u_s^\top\int_{t\in\mathcal{T}}\prod_kt_k^m\dd{t}
\]
Hence, $\bar{Z}(s)$ is finite if and only if so is $\int_{t\in\mathcal{T}}\prod_{k\in K}t_k^m\dd{t}$. The latter is the normalising constant of the balanced Dirichlet distribution with parameter $m+1$, which is proper if and only if $m+1>0$.
\end{proof}
\subsection{The case $m\leq-1$ or $r\not>0$}
\begin{lemma}
\label{lem:m-below_1-or-r-nonpositive}
If $m\leq-1$ or $r\not>0$, then $Z(m,r)$ is infinite.
\end{lemma}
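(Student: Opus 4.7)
The two hypotheses combine disjunctively, so I would treat them separately. \textit{If $m\leq-1$}, Lemma~\ref{lem:z-bar} already settles the matter: it gives $\bar{Z}(s)=+\infty$ for every $s>0$, and therefore
\[
Z(m,r) \;=\; \int_0^\infty \bar{Z}(s)\,s^{K-1}\,\dd{s} \;=\; +\infty,
\]
with no further assumption on $r$ needed.

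\textit{If $r\not>0$}, I would relabel the coordinates so that $r_K\leq 0$, write $x'\triangleq(x_j)_{j\neq K}$, and restrict the integration defining $Z(m,r)$ to a region of the form $A\times[T,+\infty)$, where $A\subset(0,+\infty)^{K-1}$ is a compact set of positive Lebesgue measure (to be chosen depending on $m$) and $T>0$ is large. The key asymptotic input is the Stirling estimate
\[
\frac{\Gamma(x_K+S)}{\Gamma(x_K)} \;=\; x_K^{S}\bigl(1+o(1)\bigr) \hspace{.5cm}\text{as}\hspace{.5cm} x_K\to\infty,
\]
which is uniform in $S\triangleq\sum_{j\neq K}x_j$ when $x'$ ranges over the compact $A$. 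Combined with $\exp(-r_Kx_K)\geq1$ (from $r_K\leq 0$) and with the continuity on $A$ of the remaining factors, this yields a pointwise lower bound
\[
\mathcal{B}(x)^{-m}\exp\!\Bigl(-\sum_k r_kx_k\Bigr) \;\geq\; c\cdot x_K^{mS(x')}
\]
on $A\times[T,+\infty)$, for some constant $c>0$ depending only on $m$, $r$ and $A$.

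It then suffices to choose $A$ so that $mS(x')\geq-1$ throughout $A$: for then $\int_T^{\infty}x_K^{mS}\,\dd{x_K}=+\infty$ pointwise in $x'$, and the restricted double integral diverges by Tonelli, forcing $Z(m,r)=+\infty$. If $m\geq 0$ this is automatic since $mS\geq 0$ for any compact $A\subset(0,+\infty)^{K-1}$; otherwise one can take $A=[\delta,2\delta]^{K-1}$ with $\delta>0$ small enough that $2(K-1)\delta\leq-1/m$, which forces $S\leq-1/m$ on $A$ and hence $mS\geq-1$. The main delicate point is the uniformity of the Stirling asymptotic over the compact $A$, needed so that the constant $c$ above is positive and independent of $x'$; this is classical but has to be spelled out.
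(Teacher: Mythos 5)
Your proof is correct, and its skeleton matches the paper's: the $m\leq-1$ branch is dispatched identically via Lemma~\ref{lem:z-bar}, and for $r\not>0$ both arguments isolate the coordinate with nonpositive rate, use $\exp(-r_Kx_K)\geq1$, lower-bound the ratio $\Gamma(x_K+S)/\Gamma(x_K)$ raised to the power $m$ by a power of $x_K$, and conclude by Tonelli that the inner integral diverges on a positive-measure set of the remaining coordinates. The difference lies in the key estimate and the exponent it produces. The paper avoids Stirling entirely: restricting the sum $z=\sum_{h\neq k}x_h$ to $(0,1]$ or $[1,\infty)$ according to the sign of $m$, it compares $\Gamma(z+x_k)$ with $\Gamma(1+x_k)=x_k\Gamma(x_k)$ using only the monotonicity of $\Gamma$ on $[2,\infty)$, which yields the exponent $m$ exactly and then invokes $m>-1$ (legitimately available, since the $m\leq-1$ case is already settled) to get divergence. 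You instead invoke the uniform asymptotic $\Gamma(x_K+S)/\Gamma(x_K)=x_K^{S}(1+o(1))$ over a compact $A$, obtaining the exponent $mS(x')$, and then tune $A$ (e.g.\ $[\delta,2\delta]^{K-1}$ with $\delta$ small when $m<0$) so that $mS\geq-1$. What your version buys is independence from the first case: it proves divergence for every $m\in\mathbb{R}$ as soon as some $r_k\leq0$, whereas the paper's second branch only covers $m>-1$. The price is the uniformity of the Stirling estimate over $A$, which, as you acknowledge, must be spelled out; the paper's comparison is a one-line elementary bound with no uniformity issue. Both proofs are complete modulo that routine verification (and both silently assume $K\geq2$; for $K=1$ the claim is immediate since $\mathcal{B}(x)=1$).
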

\begin{proof}
When $m\leq-1$, consider $Z(m,r)$ as an integral on space $\mathbb{R}_+\times\mathcal{T}_K$. By Lemma~\ref{lem:z-bar}, the integrand $\bar{Z}(s)$ is infinite for all $s$, hence, obviously, so is the integral $Z(m,r)$. Now, let's assume $m>-1$ and $r\not>0$, i.e. $r_k\leq0$ for some $k\in K$. Consider $Z(m,r)$ as an integral on space $\mathbb{R}_+^K$. We have, isolating $x_k$ in the integrand,
\[
Z(m,r) = \int_{x\in\mathbb{R}_+^{K\setminus\{k\}}}F(\sum_{h\not=k}x_h)\prod_{h\not=k}\Gamma(x_h)^{-m}\exp-r_hx_h\dd{x_h}
\hspace{.5cm}\text{where}\hspace{.5cm}
F(z) \triangleq \int_{x\in\mathbb{R}_+}\left(\frac{\Gamma(x)}{\Gamma(z+x)}\right)^{-m}\exp-r_kx\dd{x}
\]
Let $\Delta\triangleq(0,1]$ if $m\leq0$ and $\Delta\triangleq[1,\infty)$ if $0<m$.
Recall that $\Gamma$ is increasing on $[2,\infty)$. Hence, for any $z\in\Delta$ and $x\geq2$:
\begin{itemize}
\item
If $m\leq0$ then $z\leq1$ hence $\Gamma(z+x)\leq\Gamma(1+x)=x\Gamma(x)$ hence $\left(\frac{\Gamma(x)}{\Gamma(z+x)}\right)^{-m}\geq x^m$
\item
If $0<m$ then $z\geq1$ hence $\Gamma(z+x)\geq\Gamma(1+x)=x\Gamma(x)$ hence $\left(\frac{\Gamma(x)}{\Gamma(z+x)}\right)^{-m}\geq x^m$
\end{itemize}
In both cases, by integration, we have for any $z\in\Delta$
\[
F(z)\geq\int_{x\geq2}x^m\exp-r_kx\dd{x}\geq\int_{x\geq2}x^m\dd{x}=\infty
\]
Hence $F(\sum_{h\not=k}x_h)$ is infinite on a non null subset of $\mathbb{R}_+^{K\setminus\{k\}}$, hence $Z(m,r)$ is infinite.
\end{proof}
\subsection{The case $r>0$ and $-1<m$}
We now assume $r>0$ and $-1<m$. One case can easily be treated:
\begin{lemma}
\label{lem:m-zero}
If $r>0$ and $m=0$ then $\mbeta(0,r)$ is a multivariate exponential distribution and $Z(0,r)=\prod_kr_k^{-1}$.
\end{lemma}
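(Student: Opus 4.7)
The plan is a direct computation: substitute $m=0$ into the definition and observe that the $\mathcal{B}(x)^{-m}$ factor collapses.

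First I would note that when $m=0$, the factor $\mathcal{B}(x)^{-m}$ in the definition of $\mbeta$ equals $1$ identically, so the unnormalised density reduces to $\exp-\sum_k r_k x_k$ on $\mathbb{R}_+^K$. Since $r>0$, this unnormalised density factorises as $\prod_k \exp(-r_k x_k)$, which is (up to normalisation) the density of a product of independent exponential distributions with rates $r_k$, i.e. a multivariate exponential.

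Then for the normalising constant, I would use Fubini on $\mathbb{R}_+^K$ to split
\[
Z(0,r) \;=\; \int_{\mathbb{R}_+^K} \exp-\sum_k r_k x_k \,\dd{x} \;=\; \prod_k \int_0^\infty \exp(-r_k x_k)\,\dd{x_k} \;=\; \prod_k r_k^{-1},
\]
where each one-dimensional integral is finite precisely because $r_k>0$ (which is the standing assumption of this case).

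There is no real obstacle here; the lemma is essentially a sanity check that the $\mbeta$ family degenerates to a known distribution on the line $m=0$. The only thing worth flagging is that the factorisation in the final step requires $r_k>0$ for every $k$, which is why the hypothesis $r>0$ is essential; without it the one-dimensional integrals would diverge, consistently with Lemma~\ref{lem:m-below_1-or-r-nonpositive}.
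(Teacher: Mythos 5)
Your proposal is correct and follows the same route as the paper, which simply observes that at $m=0$ the density factorises into independent exponentials with rates $r_k$; your explicit Fubini computation of $Z(0,r)=\prod_k r_k^{-1}$ just spells out the detail the paper leaves implicit.
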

\begin{proof}
Simply observe that $\mbeta(0,r)$ is the product of independent exponentials, each with rate $r_k$ for $k{=}1{:}K$.
\end{proof}
When $m\not=0$, the finiteness of $Z(m,r)$ depends on the behaviour of $\bar{Z}$ (which is finite by Lemma~\ref{lem:z-bar}) near $0$ and near $\infty$. One side (depending on the sign of $m$) can easily be treated.
\begin{lemma}
\label{lem:z-restricted}
If $r>0$ and $-1<m\not=0$, then $Z_\Delta(m,r)$ is finite, where
\begin{eqnarray}
\label{eqn:z-delta}
Z_\Delta(m,r) & \triangleq & \int_{s\in\Delta}\bar{Z}(s)s^{K-1}\dd{s}
\hspace{1cm}\text{and}\hspace{1cm}
\Delta\triangleq\left\{\begin{array}{l@{\hspace{.25cm}\text{if}\hspace{.25cm}}l} [1,\infty) & m<0 \\ {(0,1]} & m>0 \end{array}\right.
\end{eqnarray}
Hence, $Z(m,r)$ is finite if and only if so is $Z_{\mathbb{R}_+\setminus\Delta}$, since $Z(m,r)=Z_\Delta+Z_{\mathbb{R}_+\setminus\Delta}$.
\end{lemma}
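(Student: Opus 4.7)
The plan is to leverage the factorisation that appears in the proof of Lemma~\ref{lem:z-bar}, namely
\[
\bar{Z}(s) = \int_{t\in\mathcal{T}} \prod_k t_k^m\, G(s,t)\dd{t},
\qquad
G(s,t) \triangleq \left(\frac{\prod_k\Gamma(1+st_k)}{s^K\Gamma(s)}\right)^{-m}\exp-s\inner{r}{t},
\]
and to exhibit, on each range $\Delta$, a bound $G(s,t)\leq\phi(s)$ uniform in $t$ such that $s^{K-1}\phi(s)$ is integrable on $\Delta$. Since $\int_{t\in\mathcal{T}}\prod_k t_k^m\dd{t}$ is finite whenever $m>-1$ (it is the normalising constant of the balanced Dirichlet with parameter $m+1$), this forces $Z_\Delta(m,r)\leq C\int_\Delta s^{K-1}\phi(s)\dd{s}<\infty$. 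The additive splitting $Z=Z_\Delta+Z_{\mathbb{R}_+\setminus\Delta}$ is then a tautology of the Lebesgue integral.

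\textbf{Case $m>0$, $\Delta=(0,1]$.} Here $st_k\in[0,1]$ for every $(s,t)\in\Delta\times\mathcal{T}$, so $\Gamma(1+st_k)$ and $\Gamma(1+s)$ are squeezed between positive constants by continuity and non-vanishing of $\Gamma$ on $[1,2]$. Rewriting $s^K\Gamma(s)=s^{K-1}\Gamma(1+s)$, one obtains $G(s,t)\leq C\,s^{(K-1)m}$ uniformly in $t$, since $\exp-s\inner{r}{t}\leq 1$. Hence $\bar{Z}(s)s^{K-1}\leq C'\,s^{(K-1)(m+1)}$, which is bounded on $(0,1]$ and a fortiori integrable.

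\textbf{Case $m<0$, $\Delta=[1,\infty)$.} This is the technical core. The crucial observation is the convex combination $1+st_k=t_k(1+s)+(1-t_k)\cdot 1$, to which log-convexity of $\Gamma$ (Bohr--Mollerup) applies: using $\Gamma(1)=1$,
\[
\Gamma(1+st_k)\;\leq\;\Gamma(1+s)^{t_k}.
\]
Multiplying over $k$ and exploiting $\sum_k t_k=1$ yields $\prod_k\Gamma(1+st_k)\leq\Gamma(1+s)=s\Gamma(s)$, and therefore
\[
\left(\frac{\prod_k\Gamma(1+st_k)}{s^K\Gamma(s)}\right)^{-m}\;\leq\;s^{(K-1)m}\;\leq\;1
\]
on $s\geq 1$ (recall $-m>0$ and $(K-1)m\leq 0$). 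Setting $r_{\min}\triangleq\min_k r_k$, which is strictly positive by the assumption $r>0$, we have $\exp-s\inner{r}{t}\leq\exp-sr_{\min}$, so $G(s,t)\leq\exp-sr_{\min}$ uniformly in $t$. Thus $\bar{Z}(s)s^{K-1}\leq C\,s^{K-1}\exp-sr_{\min}$, which decays exponentially and is plainly integrable on $[1,\infty)$.

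The main obstacle is the $m<0$ side: the factor $\mathcal{B}(st)^{-m}$ has no obvious uniform control as $s\to\infty$, and crude bounds on $\Gamma$ tend to produce prefactors that defeat the exponential $\exp-s\inner{r}{t}$. Log-convexity of $\Gamma$ furnishes exactly the inequality $\prod_k\Gamma(1+st_k)\leq\Gamma(1+s)$ needed to collapse that prefactor to a harmless $s^{(K-1)m}\leq 1$; the positivity of $r$ then supplies the exponential decay that closes the integral.
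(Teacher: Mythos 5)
Your proof is correct, but it takes a genuinely different route from the paper's. The paper never bounds the Beta factor pointwise: it first shows, from the monotonicity of the digamma function, that $\mathcal{B}(st)$ is decreasing in $s$, hence that $F(s)\triangleq\bar{Z}(s)\exp(r^*s)$ (with $r^*=\min_kr_k$ if $m<0$, $\max_kr_k$ if $m>0$) is monotone in $s$, and then bounds $Z_\Delta\leq F(1)\int_\Delta s^{K-1}\exp(-r^*s)\dd{s}$, so that everything reduces to the finiteness of $\bar{Z}(1)$ already supplied by Lemma~\ref{lem:z-bar}. You instead construct explicit envelopes uniform in $t$: on $(0,1]$ (case $m>0$) the boundedness of $\Gamma$ on $[1,2]$ gives $\bar{Z}(s)s^{K-1}=O(s^{(K-1)(m+1)})$, and on $[1,\infty)$ (case $m<0$) log-convexity of $\Gamma$ applied to $1+st_k=t_k(1+s)+(1-t_k)\cdot1$ collapses the Gamma prefactor to $s^{(K-1)m}\leq1$ and leaves $O(s^{K-1}\exp(-r_{\min}s))$; in both cases the finite balanced Dirichlet integral $\int_{\mathcal{T}}\prod_kt_k^m\dd{t}$ (the same quantity Lemma~\ref{lem:z-bar} isolates, finite precisely because $m>-1$) absorbs the singular $t$-dependence. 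The inequality directions in your power manipulations (raising to $-m$ with the appropriate sign of $m$) are all handled correctly, so the argument is sound. What your version buys is explicit quantitative control of $\bar{Z}(s)s^{K-1}$ on $\Delta$, at the price of invoking Bohr--Mollerup; the paper's version is softer, needing only that $\Psi$ is increasing plus the single value $\bar{Z}(1)$, and incidentally records the monotonicity of $\bar{Z}(s)\exp(r^*s)$, which makes the comparison at the split point $s=1$ immediate.
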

\begin{proof}
For a fixed $t\in\mathcal{T}$, by derivation, we have
\[
\frac{\dd{\log\mathcal{B}(st)}}{\dd{s}} =
\frac{\dd}{\dd{s}}(\sum_k\log\Gamma(st_k)-\log\Gamma(s)) =
\sum_kt_k\Psi(st_k)-\Psi(s)<\sum_kt_k\Psi(s)-\Psi(s)=0
\]
Here $\Psi$ is the derivative of the $\log\Gamma$ function (a.k.a. digamma function), which is increasing~\cite{mortici_new_2011}, hence $\Psi(st_k)<\Psi(s)$. Therefore, $\log\mathcal{B}(st)$ is a decreasing function of $s$, hence so is $\mathcal{B}(st)$. Hence $\mathcal{B}(st)^{-m}$ is $m$-increasing in $s$ (meaning increasing when $m>0$ and decreasing when $m<0$).

let $r^*\triangleq\min_kr_k$ when $m<0$ and $r^*\triangleq\max_kr_k$ when $m>0$. By construction $\exp-s\inner{r-r^*}{t}$ is also $m$-increasing in $s$ (or constant if all the components of $r$ are equal), hence $\mathcal{B}(st)^{-m}\exp-s\inner{r-r^*}{t}$ is $m$-increasing in $s$. By integration over $t$, we get that $F(s)\triangleq\bar{Z}(s;m,r-r^*)$, which is finite by Lemma~\ref{lem:z-bar}, is $m$-increasing in $s$.

Observe that $\exp-s\inner{r-r^*}{t}=\exp sr^*\exp-s\inner{r}{t}$, since $\sum_kt_k=1$. Hence $F(s)=\bar{Z}(s)\exp sr^*$ and
\[
Z_\Delta = \int_{s\in\Delta} F(s)s^{K-1}\exp-r^*s\dd{s}
\]
and, since $F$ is $m$-increasing, we have (the split value 1 is arbitrary)
\[
\begin{array}{r@{\hspace{.5cm}}l@{=}l}
\text{when }m<0\text{ ($F$ is decreasing):} & Z_{[1,\infty)} &
\int_{s\geq1} F(s)s^{K-1}\exp-r^*s\dd{s}\leq F(1)\int_{s\geq1}s^{K-1}\exp-r^* s\dd{s} <\infty\\
\text{when }m>0\text{ ($F$ is increasing):} & Z_{(0,1]} &
\int_{s\leq1} F(s)s^{K-1}\exp-r^*s\dd{s}\leq F(1)\int_{s\leq1}s^{K-1}\exp-r^* s\dd{s} <\infty
\end{array}
\]
Hence, $Z(m,r)$ is finite if and only if so is $Z_{\Delta}(m,r)$ where $\Delta=(0,1]$ when $m<0$ and $\Delta=[1,\infty)$ when $m>0$.
\end{proof}
\begin{figure}
\begin{center}
\includegraphics[scale=.4]{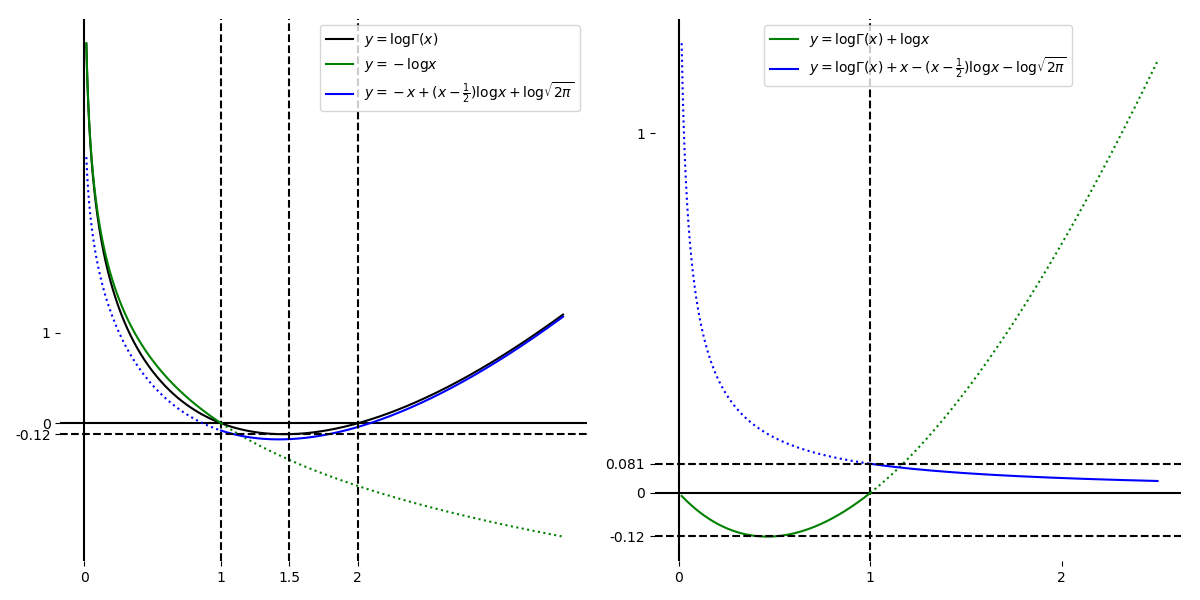}
\end{center}
\caption{\label{fig:approx}Left: curve of the $\log\Gamma$ function and its two approximations. Right: detail of the approximation gaps.}
\end{figure}
Thus, we need only study the behaviour of $\bar{Z}$ near $0$ when $-1<m<0$ and near $\infty$ when $m>0$. For that, we look for an approximation of $\log\mathcal{B}(st)$, which we obtain from an approximation of the $\log\Gamma$ function, actually one near $0$ and a different one near $\infty$, as illustrated in Figure~\ref{fig:approx}. Expanding the definition of $\bar{Z}$ in Equation~(\ref{eqn:z-delta}) gives:
\begin{eqnarray}
\label{eqn:z-delta-exp}
Z_\Delta & = & \int_{s\in\Delta}s^{K-1}\int_{t\in\mathcal{T}}\exp(-m\log\mathcal{B}(st)-s\inner{r}{t})\dd{t}\dd{s}
\end{eqnarray}
\subsection{The case $r>0$ and $-1<m<0$}
\begin{lemma}
\label{lem:m-negative-above_1}
If $r>0$ and $-1<m<0$, then $Z(m,r)$ is finite.
\end{lemma}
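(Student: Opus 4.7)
The plan is to invoke Lemma~\ref{lem:z-restricted}: since $m<0$ we have $\Delta=[1,\infty)$, so $Z_{[1,\infty)}$ is already known to be finite, and it suffices to show that $Z_{(0,1]}$ is finite, i.e.\ to control the growth of the integrand $\bar{Z}(s)s^{K-1}$ as $s\to0^+$.

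The main step is to produce a uniform approximation of $\mathcal{B}(st)$ on $(0,1]\times\mathcal{T}$. Continuing the line of Lemma~\ref{lem:z-bar}, I would apply the identity $\Gamma(x)=\Gamma(1+x)/x$ to every $\Gamma$-factor in $\mathcal{B}(st)$ so as to extract the singular behaviour at $0$ explicitly, yielding
\[
\mathcal{B}(st)=\frac{\prod_k\Gamma(1+st_k)}{s^{K-1}\,\Gamma(1+s)\,\prod_kt_k}.
\]
For $s\in(0,1]$ and $t\in\mathcal{T}$, both $1+s$ and every $1+st_k$ lie in the compact interval $[1,2]$, on which $\Gamma$ is continuous and strictly positive. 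The residual $\Gamma$-ratio $\Gamma(1+s)^m\prod_k\Gamma(1+st_k)^{-m}$ is therefore sandwiched between two strictly positive constants independent of $(s,t)$, which yields a pointwise bound of the form $\mathcal{B}(st)^{-m}\leq C\,s^{m(K-1)}\prod_kt_k^m$.

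Substituting into the definition of $\bar{Z}(s)$ and using the crude bound $\exp-s\inner{r}{t}\leq1$ (valid because $r>0$ and $s>0$) then decouples the $s$ and $t$ integrations:
\[
\bar{Z}(s)\leq C\,s^{m(K-1)}\int_{t\in\mathcal{T}}\prod_kt_k^m\,\dd{t},
\]
and the remaining integral is finite by Lemma~\ref{lem:z-bar} since $m>-1$. Hence $\bar{Z}(s)s^{K-1}$ is dominated near $0$ by a constant multiple of $s^{(m+1)(K-1)}$, whose exponent is nonnegative, so $Z_{(0,1]}$ converges, which concludes.

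The only delicate point is the choice of approximation: it must be uniform in $t$ and must expose a $t$-profile compatible with the balanced-Dirichlet measure $\prod_kt_k^m\dd{t}$ handled by Lemma~\ref{lem:z-bar}. The $\Gamma(x)=\Gamma(1+x)/x$ substitution produces precisely this profile, after which the remaining verification -- that the exponent $(m+1)(K-1)$ exceeds $-1$ -- is automatic from $m>-1$; note in particular that we do not need any tightness on the $r$-side, only $r>0$ to make the exponential harmless on the small-$s$ regime.
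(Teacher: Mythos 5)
Your proof is correct and is essentially the paper's own argument: the substitution $\Gamma(x)=\Gamma(1+x)/x$ is exactly the paper's decomposition $\log\Gamma(z)=-\log z+h(z)$ with $h(z)=\log\Gamma(1+z)$ bounded on $(0,1]$, leading to the same bound $Z_{(0,1]}\leq C\,\mathcal{B}(1+m)\int_{s\leq1}s^{(1+m)(K-1)}\dd{s}<\infty$ and the same reduction via Lemma~\ref{lem:z-restricted}. (Only a cosmetic remark: the finiteness of $\int_{\mathcal{T}}\prod_kt_k^m\dd{t}$ is the balanced-Dirichlet normalisation fact used inside the proof of Lemma~\ref{lem:z-bar}, not the statement of that lemma itself.)
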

\begin{proof}
Define $h(z)\triangleq\log\Gamma(z)+\log z$. Thus we have
\[
\left.
\begin{array}{rcl}
\log\Gamma(s) & = & -\log s+h(s)\\
\sum_k\log\Gamma(st_k) & = & \sum_k-\log st_k+h(st_k)
\end{array}
\right\}
\Rightarrow\;
-\log\mathcal{B}(st)=(K-1)\log s+\sum_k\log t_k+\underbrace{h(s)-\sum_kh(st_k)}_{H(s,t)}
\]
Hence, reporting in Equation~(\ref{eqn:z-delta-exp}), we get
\begin{eqnarray*}
Z_{(0,1]} & = &
\int_{s\leq1}s^{K-1}\int_{t\in\mathcal{T}}\exp\left(m((K-1)\log s+\sum_k\log t_k+H(s,t))-s\inner{r}{t}\right)\dd{t}\dd{s}\\
& = & \int_{s\leq1}s^{(1+m)(K-1)}\int_{t\in\mathcal{T}}\prod_kt_k^m\exp(-mH(s,t)-s\inner{r}{t})\dd{t}\dd{s}\\
\end{eqnarray*}
By construction, $h$ is bounded on $(0,1]$, hence $H(s,t)$ is bounded for $s,t\in(0,1]{\times}\mathcal{T}$, and so is $s\inner{r}{t}$. Hence, we have for some strictly positive constant $U$
\[
Z_{(0,1]}\leq U\int_{s\leq1}s^{(1+m)(K-1)}\int_{t\in\mathcal{T}}\prod_kt_k^m\dd{t}\dd{s}=U\mathcal{B}(1+m)\int_{s\leq1}s^{(1+m)(K-1)}\dd{s}<\infty
\]
Hence $Z_{(0,1]}$ is finite, hence so is $Z(m,r)$ by Lemma~\ref{lem:z-restricted}.
\end{proof}
\subsection{The case $r>0$ and $0<m$}
\begin{lemma}
\label{lem:m-positive-T-not1}
If $r>0$ and $0<m$, then $Z(m,r)$ is finite if $T<1$ and infinite if $T>1$, where $T\triangleq\sum_k\exp-\frac{r_k}{m}$. 
\end{lemma}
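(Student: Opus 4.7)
The plan is to use Lemma~\ref{lem:z-restricted} to reduce the finiteness of $Z(m,r)$ (when $m>0$) to that of $Z_{[1,\infty)}$, so that all of the analysis can focus on the regime $s\to\infty$. For large $s$, I will use Stirling's expansion $\log\Gamma(z) = z\log z - z + O(\log z)$ to obtain an asymptotic form of the integrand in Equation~(\ref{eqn:z-delta-exp}). Applied to $\log\mathcal{B}(st) = \sum_k\log\Gamma(st_k) - \log\Gamma(s)$, and using $\sum_k t_k = 1$ to cancel the $s\log s$ and $-s$ contributions, the leading terms collapse into
\[
-m\log\mathcal{B}(st) - s\inner{r}{t} = s\,\phi(t) + O(\log s),
\]
where $\phi(t)\triangleq m\entropy(t)-\inner{r}{t}$ and $\entropy(t)\triangleq-\sum_k t_k\log t_k$ is the Shannon entropy.

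The crux of the argument is the maximisation of $\phi$ over $\mathcal{T}$. Since $\entropy$ is strictly concave and $\inner{r}{\cdot}$ is linear, $\phi$ is strictly concave, and a Lagrange-multiplier calculation yields the unique maximiser $t^\star_k=\exp(-r_k/m)/T$, with maximum value $\phi(t^\star)=m\log T$. Substituting back confirms this value, and the sign of $\log T$ controls precisely the dichotomy announced by the lemma.

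When $T>1$ (divergence): $\phi(t^\star)=m\log T>0$ and $t^\star$ lies in the interior of $\mathcal{T}$. I will select a compact neighbourhood $V\subset\mathcal{T}$ of $t^\star$ on which $\phi\geq c$ for some $c>0$ and all coordinates are bounded below by some $\epsilon>0$. On $V\times[1,\infty)$ all arguments $st_k\geq\epsilon$, so Stirling applies uniformly and gives a pointwise lower bound on the integrand of the form $Cs^\alpha\exp(cs)$. Integrating over $V$ (of positive measure) and then in $s$ forces $Z_{[1,\infty)}=\infty$.

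When $T<1$ (convergence): $\phi(t^\star)=m\log T<0$, so $\phi$ is bounded above by this strictly negative constant on all of $\mathcal{T}$. The goal is to upgrade the heuristic asymptotic into a uniform inequality on $[1,\infty)\times\mathcal{T}$, yielding an integrand bounded by $Cs^\alpha\exp(sm\log T)\,g(t)$ for some $t$-integrable $g$, whence $Z_{[1,\infty)}<\infty$. The main obstacle is precisely this uniformity: Stirling degrades when some $st_k$ is small, i.e., near the boundary of the simplex. I plan to address this by splitting the index set into $\{k:st_k\geq 1\}$ and $\{k:st_k<1\}$, applying Stirling on the first and the small-argument form $\log\Gamma(z)=-\log z+O(1)$ on the second (this is exactly the auxiliary function $h$ already exploited in the proof of Lemma~\ref{lem:m-negative-above_1}). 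The boundary of $\mathcal{T}$ is favourable since $-\log\mathcal{B}(st)\to-\infty$ there, so the remaining technical work is only the bookkeeping of the approximation errors under this splitting.
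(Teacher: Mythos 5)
Your proposal is correct and follows essentially the paper's route: reduction to $Z_{[1,\infty)}$ via Lemma~\ref{lem:z-restricted}, a Stirling expansion of $\log\mathcal{B}(st)$ whose leading term is $s\,(m\entropy(t)-\inner{r}{t})$, maximised over $\mathcal{T}$ at $t^\star_k=\exp(-r_k/m)/T$ with value $m\log T$ (the paper phrases the same fact as minimising the divergence $D(t)=\sum_kt_k\log(t_k/\tau_k)$, with $\tau=t^\star$), then a lower bound on an interior neighbourhood of $t^\star$ for $T>1$ (your neighbourhood $V$ is a simplified version of the paper's $\mathcal{T}_\alpha$, which the paper keeps as a family in $\alpha$ because it is reused for the case $T=1$ in Lemma~\ref{lem:m-positive-T-1}) and a uniform upper bound for $T<1$. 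The only real divergence is your treatment of the boundary of the simplex in the convergence case: the paper needs no index splitting, because its Stirling remainder $h(z)=\log\Gamma(z)+z-(z-\frac{1}{2})\log z$ is positive (bounded below by $\log\sqrt{2\pi}$) and enters $H(s,t)=h(s)-\sum_kh(st_k)+\frac{1}{2}\sum_k\log t_k$ with a minus sign, while $\log t_k\leq0$; hence $H(s,t)\leq h(1)$ holds on all of $[1,\infty)\times\mathcal{T}$ with no case distinction, i.e.\ the boundary degradation has the favourable sign for the upper bound. Your splitting does also close: for indices with $st_k<1$, trading the Stirling form for $\log\Gamma(st_k)=-\log(st_k)+O(1)$ produces per-index errors of the form $st_k\log(st_k)-st_k$, which are bounded when $st_k<1$, and the leftover $\log t_k$ terms again have the right sign, so one ends with $-\log\mathcal{B}(st)\leq s\entropy(t)+O(\log s)+O(1)$ uniformly, which suffices since $\exp(ms\log T)$ with $T<1$ absorbs any power of $s$. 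So the bookkeeping you defer is genuinely routine, but the paper's one-line sign observation renders it unnecessary.
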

\begin{proof}
Define $h(z)\triangleq\log\Gamma(z)+z-(z-\frac{1}{2})\log z$. Thus, $h$ denotes the difference between $\log\Gamma$ and its Stirling's approximation (up to a constant). We have, for any $t\in\mathcal{T}$, using $\sum_kt_k=1$,
\[
\begin{array}{l}
\left.
\begin{array}{rcl}
\log\Gamma(s) & = & -s+(s-\frac{1}{2})\log s+h(s)\\
\sum_k\log\Gamma(st_k) & = & \sum_k-st_k+(st_k-\frac{1}{2})\log st_k+h(st_k)
\end{array}
\right\}
\Rightarrow\\ \\
-\log\mathcal{B}(st)=\frac{K-1}{2}\log s-s\underbrace{\sum_kt_k\log t_k}_{J(t)}+\underbrace{h(s)-\sum_kh(st_k)+\frac{1}{2}\sum_k\log t_k}_{H(s,t)}
\end{array}
\]
Observe that the terms in $s\log s$ cancelled out. Hence, reporting in Equation~(\ref{eqn:z-delta-exp}), we get
\begin{eqnarray*}
Z_{[1,\infty)} & = &
\int_{s\geq1}s^{K-1}\int_{t\in\mathcal{T}}\exp\left(m(\frac{K-1}{2}\log s-sJ(t)+H(s,t))-s\inner{r}{t}\right)\dd{t}\dd{s}\\
& = & \int_{s\geq1}s^{(1+\frac{m}{2})(K-1)}\int_{t\in\mathcal{T}}\exp m\left(-s(J(t)+\frac{\inner{r}{t}}{m})+H(s,t)\right)\dd{t}\dd{s}
\end{eqnarray*}
We first seek bounds for the term $H(s,t)$. Function $h$ is decreasing and lower bounded by $\log\sqrt{2\pi}$, hence positive~\cite{mortici_new_2011}. Therefore, for any $s\geq1$:
\begin{itemize}
\item
For any $t\in\mathcal{T}$, since $h(st_k)>0$ and $t_k\leq1$, hence $\log t_k\leq0$, we have $H(s,t)<h(s)\leq h(1)$.
\item
On the other hand, $h$ is not upper bounded near $0$, hence for any given $s$, the function $H(s,.)$ is not lower bounded near the border of $\mathcal{T}$. However, choose $\tau\in\mathcal{T}$ and for any $0\leq\alpha\leq1$ consider the sub-domain of $\mathcal{T}$ defined by
\begin{eqnarray*}
\mathcal{T}_\alpha & \triangleq & \{t\in\mathcal{T}|\forall k\in K\;t_k\geq\alpha\tau_k\}
\end{eqnarray*}
\begin{figure}
\begin{center}
\begin{tikzpicture}[scale=2]
\coordinate (A) at (0,0);
\coordinate (B) at (2,0);
\coordinate (C) at (1,{sqrt(3)});
\coordinate (T) at (1.4,.35);
\coordinate (Ax) at ($ (A)!.3!(T) $);
\coordinate (Bx) at ($ (B)!.3!(T) $);
\coordinate (Cx) at ($ (C)!.3!(T) $);
\draw (A) -- (B) -- (C) -- cycle;
\draw[dotted] (A) -- (T);
\draw[dotted] (B) -- (T);
\draw[dotted] (C) -- (T);
\draw[blue] (Ax) -- (Bx) -- (Cx) -- cycle;

\node [label={[inner sep=0pt,xshift=1.5mm]120:$\scriptstyle{\mathcal{T}_0=\mathcal{T}}$}] at ($ (A)!.5!(C) $) {};
\node [label={[blue,inner sep=0pt,xshift=1.5mm]120:$\scriptstyle{\mathcal{T}_\alpha}$}] at ($ (Ax)!.5!(Cx) $) {};
\node [fill=red,isosceles triangle,rotate=90,label={[red,inner sep=0pt]85:$\scriptstyle{\mathcal{T}_1=\{\tau\}}$},inner sep=0pt,minimum width=1mm] at (T) {};
\end{tikzpicture}
\end{center}
\caption{\label{fig:subdomain}An illustration of the sub-domain $\mathcal{T}_\alpha$ of the simplex $\mathcal{T}$ used for lower bounding $Z(m,r)$ when $m>0$. The idea is to avoid the border of $\mathcal{T}$ (when $\alpha>0$) while still being non-null (when $\alpha<1$).}
\end{figure}
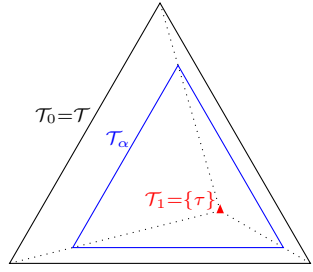
This compact convex set, illustrated in Figure~\ref{fig:subdomain}, contains $\tau$ and has a non null measure in $\mathcal{T}$, whenever $\alpha<1$ (for $\alpha=1$ it degenerates into the singleton $\{\tau\}$). For $\alpha=0$ we have $\mathcal{T}_0=\mathcal{T}$ over which $H(s,.)$ is not lower bounded, but whenever $\alpha>0$ we get a lower bound over $\mathcal{T}_\alpha$, since $-h$ and $\log$ are increasing and $\forall t\in\mathcal{T}_\alpha\;t\geq\alpha\tau$:
\[
H(s,t)\geq h(s)-\sum_kh(s\alpha\tau_k)+\frac{1}{2}\sum_k\log(\alpha\tau_k)\geq-\sum_kh(\alpha\tau_k)+\frac{1}{2}\sum_k\log(\alpha\tau_k)
\]
\end{itemize}
Neither the upper bound nor the lower bound of $H(s,t)$ are dependent on $s,t$, hence, we have for some strictly positive constants $U_\alpha^\bot,U^\top$, for all $s,t\in[1,\infty){\times}\mathcal{T}_\alpha$:
\[
U_\alpha^\bot\;\;\leq\exp(mH(s,t))\leq\;\; U^\top
\]
Let's introduce the short-hands $M\triangleq(1+\frac{m}{2})(K-1)$ and
\begin{eqnarray}
\label{eqn:F}
F_\alpha(s)& \triangleq&\int_{t\in\mathcal{T}_\alpha}\exp-ms(J(t)+\frac{\inner{r}{t}}{m})\dd{t}
\end{eqnarray}
Observe that $M>0$. Using the bounds on $\exp(mH)$, we get:
\begin{equation}
\label{eqn:Z-bound}
U_\alpha^\bot\int_{s\geq1}s^MF_\alpha(s)\dd{s}
\;\;\leq Z_{[1,\infty)}\leq\;\;
U^\top\int_{s\geq1}s^MF_0(s)\dd{s}
\end{equation}
We now seek bounds for $F_\alpha$. We use a specific choice of $\tau$:
\[
\tau_k \triangleq \frac{1}{T}\exp-\frac{r_k}{m}
\hspace{1cm}\text{where}\hspace{1cm}
T \triangleq \sum_k\exp-\frac{r_k}{m}
\]
It is then easy to show that, for all $t\in\mathcal{T}$,
\[
J(t)+\frac{\inner{r}{t}}{m}=D(t)-\log T
\hspace{1cm}\text{where}\hspace{1cm}
D(t) \triangleq \sum_k t_k\log\frac{t_k}{\tau_k}
\]
Observe that $D(t)$ is the information divergence between $t$ and $\tau$ viewed as discrete distributions over $K$. It is a convex function of $t$ which has minimum $0$ reached at $\tau$ and maximum $\mu_\alpha$ over $\mathcal{T}_\alpha$, reached at one of its corners. Hence, reporting in Equation~(\ref{eqn:F}) we get, for any $s\geq1$
\[
\mathcal{A}_\alpha\exp ms(\log T-\mu_\alpha)
\;\;\leq F_\alpha(s)\leq\;\;
\mathcal{A}_\alpha\exp ms\log T
\hspace{1cm}\text{where}\hspace{1cm} \mathcal{A}_\alpha\triangleq\int_{t\in\mathcal{T}_\alpha}\dd{t}
\]
Reporting in Equation~(\ref{eqn:Z-bound}), we get
\[
\mathcal{A}_\alpha U_\alpha^\bot\int_{s\geq1}s^M\exp ms(\log T-\mu_\alpha)\dd{s}
\;\;\leq Z_{[1,\infty)}\leq\;\;
\mathcal{A}_0 U^\top \int_{s\geq1}s^M\exp ms\log T\dd{s}
\]
Hence $Z_{[1,\infty)}$ is finite if $\log T<0$, and infinite if $\log T-\mu_\alpha\geq0$ for at least some $\alpha$, i.e. $\log T>\inf_\alpha\mu_\alpha$. By Lemma~\ref{lem:z-restricted}, the same holds for $Z(m,r)$. Now recall that $\mu_\alpha=D(t)$ where $t$ is one of the corners of $\mathcal{T}_\alpha$. When $\alpha$ tends to $1$, all the corners of $\mathcal{T}_\alpha$ tend to $\tau$, hence $\mu_\alpha$ tends to $D(\tau)=0$. Hence $\inf_\alpha\mu_\alpha=0$.
\end{proof}
\begin{lemma}
\label{lem:m-positive-T-1}
If $r>0$ and $0<m$ and $T=1$, then $Z(m,r)$ is infinite (where $T$ is defined as in the previous lemma).
\end{lemma}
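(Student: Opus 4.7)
My plan is to follow the same setup as Lemma~\ref{lem:m-positive-T-not1}, but to sharpen the lower bound on $F_\alpha(s)$ in the critical case $\log T=0$, where its exponential factor disappears. The key observation is that the divergence $D$ has a smooth minimum at the interior point $\tau$, so $F_\alpha(s)$ will decay only polynomially in $s$, slowly enough for the Jacobian factor $s^M$ in the outer integral to force divergence.

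First I would specialise the identity $J(t)+\tfrac{\inner{r}{t}}{m}=D(t)-\log T$ from the previous lemma to $T=1$, reducing it to $D(t)$. Fixing some $\alpha\in(0,1)$ small enough that $\tau$ lies in the interior of $\mathcal{T}_\alpha$ (which is possible since all $\tau_k>0$), the same chain of inequalities as before yields
\[
Z_{[1,\infty)}\;\geq\;U_\alpha^\bot\int_{s\geq1}s^M F_\alpha(s)\dd{s},
\hspace{0.5cm}
F_\alpha(s)=\int_{t\in\mathcal{T}_\alpha}\exp(-msD(t))\dd{t},
\]
with $M=(1+\tfrac{m}{2})(K-1)$, so it suffices to show this integral diverges.

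The heart of the argument is a Laplace-style lower bound on $F_\alpha$. Since $D$ is $C^2$ with $D(\tau)=0$ and $\nabla D(\tau)=0$ at the interior point $\tau$, Taylor's theorem gives a neighbourhood $V\subset\mathcal{T}_\alpha$ of $\tau$ and a constant $C$ with $D(t)\leq C\|t-\tau\|^2$ on $V$. Restricting the inner integral to the $(K-1)$-dimensional ball around $\tau$ of radius $1/\sqrt{s}$ (which lies inside $V$ for $s$ large), the integrand is bounded below by $\exp(-mC)$, while the volume of the ball is of order $s^{-(K-1)/2}$. This gives $F_\alpha(s)\geq As^{-(K-1)/2}$ for $s\geq s_0$ and some positive constant $A$.

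Substituting back, the lower bound on $Z_{[1,\infty)}$ becomes a positive constant times $\int_{s_0}^\infty s^{M-(K-1)/2}\dd{s}$, whose exponent equals $(K-1)(1+m)/2$, strictly positive when $K\geq2$ and $m>0$. Hence the integral diverges and $Z(m,r)=\infty$ by Lemma~\ref{lem:z-restricted}. The case $K=1$ is vacuous, since $r>0$ then forces $T<1$. The main obstacle I anticipate is establishing the Laplace-style lower bound cleanly on the simplex, in particular keeping track of the $(K-1)$-dimensional volume scaling; once that bound is in place, the comparison between the polynomial decay $s^{-(K-1)/2}$ and the Jacobian growth $s^M$ is routine.
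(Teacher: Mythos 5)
Your argument is correct, but it replaces the paper's key step with a genuinely different one. Both proofs start from the same scaffolding inherited from Lemma~\ref{lem:m-positive-T-not1}: with $T=1$ the exponent reduces to $D(t)$, and Equation~(\ref{eqn:Z-bound}) gives $Z_{[1,\infty)}\geq U_\alpha^\bot\int_{s\geq1}s^MF_\alpha(s)\dd{s}$ for a fixed $\alpha\in(0,1)$. From there the paper keeps the crude uniform bound $F_\alpha(s)\geq\mathcal{A}_\alpha\exp(-ms\mu_\alpha)$, integrates in $s$ to get a finite lower bound of order $\mathcal{A}_\alpha/\mu_\alpha^{M+1}$ for each $\alpha$, and then lets $\alpha\to1$, using the asymptotics $\mathcal{A}_\alpha=(1-\alpha)^{K-1}\mathcal{A}_0$ and $\mu_\alpha\sim c(1-\alpha)^2$ to show the family of lower bounds blows up like $(1-\alpha)^{-(K-1)(1+m)-2}$; no smoothness of $D$ beyond its values at the corners of $\mathcal{T}_\alpha$ is needed. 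You instead fix a single $\alpha$ and prove a Laplace-type lower bound at the interior minimum: $D(t)\leq C\|t-\tau\|^2$ near $\tau$ by Taylor (valid since $\tau$ is interior, $D(\tau)=0$, $\nabla D(\tau)=0$), so restricting to a ball of radius $s^{-1/2}$ gives $F_\alpha(s)\geq A\,s^{-(K-1)/2}$, and the outer integral has integrand of order $s^{(K-1)(1+m)/2}$, which diverges; your exponent arithmetic is right, the $K=1$ case is indeed vacuous, and the conclusion via Lemma~\ref{lem:z-restricted} is as in the paper. Your route is the more standard asymptotic-analysis argument and yields the sharper quantitative fact that $F_\alpha(s)$ decays only like $s^{-(K-1)/2}$, making the divergence transparent; the paper's route avoids any Taylor expansion or volume-of-ball estimate, reusing only the corner values of $D$ and elementary asymptotics in $\alpha$, at the cost of a less direct limit-exchange flavour (a family of finite bounds whose supremum is infinite). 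Just make sure, if you write it up, to state explicitly that the $s^{-1/2}$-ball is taken inside the affine hull of $\mathcal{T}$ and that the measure $\dd{t}$ is proportional to $(K-1)$-dimensional Lebesgue measure there, which is where the $s^{-(K-1)/2}$ scaling comes from.
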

\begin{proof}
Now, $\log T=0$, and, with the notations of the previous lemma, we have, using the change of variable $z=m\mu_\alpha s$:
\[
Z_{[1,\infty)}\geq\mathcal{A}_\alpha U_\alpha^\bot\int_{s\geq1}s^M\exp-m\mu_\alpha s\dd{s}=
\mathcal{A}_\alpha U_\alpha^\bot\frac{\int_{z\geq m\mu_\alpha}z^M\exp-z\dd{z}}{(m\mu_\alpha)^{M+1}}
\]
When $\alpha$ tends to $1$, $\mu_\alpha$ tends to $0$ and the integral in the numerator tends to $\Gamma(M+1)$, while $U_\alpha^\bot$ tends to $U_1^\bot>0$. Hence for some constant $R>0$ we have for $\alpha$ in a neighbourhood of $1$
\[ Z_{[1,\infty)}\geq R\frac{\mathcal{A}_\alpha}{\mu_\alpha^{M+1}} \]
The $k$-th corner of $\mathcal{T}_\alpha$ is given by $t_h=\alpha\tau_h$ for $h\not=k$ and $t_k=\alpha\tau_k+1-\alpha$. It is easy to show, using the fact that $\mu_\alpha$ is the maximum of $D$ on the corners of $\mathcal{T}_\alpha$, that
\[
\mu_\alpha=\max_k\left(\log\alpha+(\alpha\tau_k+1-\alpha)\log(1+\frac{1-\alpha}{\tau_k\alpha})\right)\sim(1-\alpha)^2\max_k\frac{1-\tau_k}{2\tau_k}
\]
Furthermore, using the definition of $\mathcal{A}_\alpha$ and the homeomorphism $t{\mapsto}\tau{+}\frac{t-\tau}{1-\alpha}$ between $\mathcal{T}_\alpha$ and $\mathcal{T}$, we get $\mathcal{A}_\alpha=(1-\alpha)^{K-1}\mathcal{A}_0$. Hence for some constant $R'>0$ we have for $\alpha$ in a neighbourhood of $1$
\[ Z_{[1,\infty)}\geq R'(1-\alpha)^{(K-1)-2(M+1)}= R'(1-\alpha)^{-(K-1)(1+m)-2}\]
When $\alpha$ tends to $1$, the right-hand side tends to $\infty$, hence $Z_{[1,\infty)}$ is infinite, hence so is $Z(m,r)$ by Lemma~\ref{lem:z-restricted}.
\end{proof}
\section{Characterisation of the distribution}
\begin{theorem}
\label{thm:conjugacy}
Let $y_{1:N}$ be a finite family of random variables over $\mathcal{T}$ and $x$ a random variable over $\mathbb{R}_+^K$. We have
\[
\begin{array}{r@{\Rightarrow}l}
\left.
\begin{array}{lr}
\text{Prior:} & x \sim \mbeta(m,r)\\
\text{Observations:} & \{\;y_n|x \sim \mathbf{Dirichlet}(x)\;\}_{n=1}^N\\
\text{Independence:} & \perp\{y_{1:N}\}|x
\end{array}
\right\}
&
\begin{array}{lr}
\text{Posterior:} & x|y_{1:N} \sim \mbeta(m+N,r-\sum_{n=1}^N\log y_n)
\end{array}
\end{array}
\]
This holds whenever the prior is proper, in which case so is the posterior.
\end{theorem}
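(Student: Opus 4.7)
My plan is to apply Bayes' rule, recognise the posterior as an $\mbeta$ density with shifted parameters, and then invoke Theorem~\ref{thm:properness} to confirm properness of those shifted parameters.

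The first step is routine algebra. Using $\mathbf{Dirichlet}(y_n;x)=\mathcal{B}(x)^{-1}\prod_k y_{n,k}^{x_k-1}$, multiplying the prior $\mbeta(x;m,r)$ by the $N$ conditional likelihoods and collecting the $x$-dependent factors gives, up to a factor depending only on $y_{1:N}$,
\[
\mathcal{B}(x)^{-(m+N)}\exp-\sum_k\Bigl(r_k-\sum_n\log y_{n,k}\Bigr)x_k,
\]
which is precisely the unnormalised $\mbeta(m+N,r-\sum_n\log y_n)$ density. Hence, provided this distribution is proper, the posterior coincides with it and the marginal likelihood automatically supplies the correct normalisation.

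Everything therefore reduces to checking, for the shifted parameters $(m',r')\triangleq(m+N,r-\sum_n\log y_n)$, the three conditions of Theorem~\ref{thm:properness}. Two are immediate: $y_n\in\mathcal{T}$ forces $y_{n,k}\in(0,1]$, so $-\log y_{n,k}\geq 0$ and $r'_k\geq r_k>0$; and $m'=m+N\geq m>-1$ because $N\geq 0$. The remaining condition---that $\sum_k\exp-r'_k/m'<1$ whenever $m'>0$---is the real obstacle, and I would handle it by splitting on the sign of $m$.

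When $m>0$, the prior already satisfies $\sum_k p_k<1$ with $p_k\triangleq\exp-r_k/m$, and since $\exp-r'_k/(m+N)=p_k^{m/(m+N)}\prod_n y_{n,k}^{1/(m+N)}$, the generalised H\"older inequality with exponents $(m+N)/m$ and $N$ copies of $m+N$ (whose reciprocals sum to $1$) yields $\sum_k\exp-r'_k/(m+N)\leq(\sum_k p_k)^{m/(m+N)}\prod_n(\sum_k y_{n,k})^{1/(m+N)}<1$. When $-1<m\leq 0$ but $m+N>0$, no useful prior exponential bound is available; instead AM-GM gives $(\prod_n y_{n,k})^{1/N}\leq\bar y_k\triangleq N^{-1}\sum_n y_{n,k}$, and raising both sides to the power $N/(m+N)\geq 1$ combined with $\bar y_k\in[0,1]$ yields $(\prod_n y_{n,k})^{1/(m+N)}\leq\bar y_k$; summing $\exp-r_k/(m+N)\cdot\bar y_k$ over $k$ and using $\sum_k\bar y_k=1$ together with $\exp-r_k/(m+N)<1$ then gives the required strict bound. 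In both subcases, the essential step is selecting the inequality that correctly propagates the prior's condition to the posterior parameters.
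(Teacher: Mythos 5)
Your proposal is correct, and it differs from the paper's treatment in how the general $N$ is handled. The Bayes-rule computation is the same in both (the paper dismisses it as a direct application of the definitions), and the properness check of the shifted parameters $(m+N,\,r-\sum_n\log y_n)$ is where the arguments diverge: the paper proves properness only for a single observation ($N=1$), splitting into $-1<m\leq0$ (where it uses $y_k^{1/(m+1)}\leq y_k$) and $m>0$ (where it uses H\"older with exponents $m+1$ and $\frac{m+1}{m}$), and then obtains general $N$ by iterating the one-step update, each posterior serving as the next prior. You instead verify the condition $\sum_k\exp-\frac{r_k-\sum_n\log y_{n,k}}{m+N}<1$ in one shot: for $m>0$ via the generalised H\"older inequality with exponents $\frac{m+N}{m}$ and $N$ copies of $m+N$ (whose reciprocals indeed sum to $1$, and which reduces to the paper's step when $N=1$), and for $-1<m\leq0$ via AM--GM plus the power inequality $\bar y_k^{N/(m+N)}\leq\bar y_k$ valid since $N/(m+N)\geq1$ and $\bar y_k\leq1$, then $\exp-\frac{r_k}{m+N}<1$ and $\sum_k\bar y_k=1$. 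Both routes are sound; the paper's iteration is lighter because after one update $m+1>0$, so only the H\"older branch ever recurs, whereas your direct argument avoids the (implicit) induction and makes the dependence on $N$ explicit, at the cost of the heavier multi-factor H\"older and AM--GM machinery. The only (shared) implicit point is that $y_{n,k}>0$ almost surely, so that $\log y_{n,k}$ is finite and $r'_k>0$.
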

\begin{proof}
Simple application of the definitions. Distribution $\mbeta$ has been explicitly constructed to be a conjugate prior to the $\mathbf{Dirichlet}$ distribution, which is exactly what Theorem~\ref{thm:conjugacy} states. Obviously, the properness of the prior implies that of the posterior. As a consistency check, we give here a redundant proof of this result, in the case of a single observation $y\in\mathcal{T}$, i.e. $N=1$; the general result ($N$ finite) is simply obtained by reiterating the argument.

Assume the prior $\mbeta(m,r)$ is proper. By Theorem~\ref{thm:properness}, we must first have $r>0$ and $-1<m$. Hence the posterior $\mbeta(m+1,r-\log y)$ satisfies $r-\log y>0$ (because $y\in\mathcal{T}$ hence $\forall k\;y_k\leq1$) and $0<m+1$. By Theorem~\ref{thm:properness}, we must also have $m\leq 0$ or $\sum_k\exp-\frac{r_k}{m}<1$:
\begin{itemize}
\item
If $-1<m\leq0$, then $\frac{1}{m+1}\geq1$, hence $u^{\frac{1}{m+1}}\leq u$ for any $0<u\leq1$ and we have
\[
\sum_k\exp-\frac{r_k-\log y_k}{m+1} = \sum_ky_k^{\frac{1}{m+1}}\exp-\frac{r_k}{m+1} <
\sum_ky_k^{\frac{1}{m+1}}\leq\sum_ky_k=1
\]
\item
If $0<m$ and $\sum_k\exp-\frac{r_k}{m}<1$, using H{\"o}lder's inequality with $p=m+1$ and $q=\frac{m+1}{m}$ we get
\[
\sum_k\exp-\frac{r_k-\log y_k}{m+1} = \sum_ky_k^{\frac{1}{m+1}}\exp-\frac{r_k}{m+1} \leq (\underbrace{\sum_ky_k}_{=1})^{\frac{1}{m+1}}(\underbrace{\sum_k\exp-\frac{r_k}{m}}_{<1})^{\frac{m}{m+1}} <1
\]
\end{itemize}
Hence, by Theorem~\ref{thm:properness}, the posterior $\mbeta(m+1,r-\log y)$ is proper.
\end{proof}
\begin{theorem}
Whenever $\mbeta(m,r)$ is a proper distribution, its moment generating function $\phi(v)$ for $v\in\mathbb{R}^K$, and its $n$-th moment $M_n$ for $n\in\mathbb{N}^K$ are given by
\[
\begin{array}{rcl@{\hspace{2cm}}rcl}
\phi(v) & = & \frac{Z(m,r-v)}{Z(m,r)} &
M_n & = & (-1)^{\sum_kn_k}\frac{1}{Z(m,r)}\frac{\partial^{\sum_kn_k}Z}{\prod_k\partial^{n_k}r_k}(m,r)
\end{array}
\]
\end{theorem}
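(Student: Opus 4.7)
The plan is to derive both formulas by direct manipulation of the integral defining $Z(m,r)$, which packages together the normalising constant, the moment generating function, and the moments into a single generating object in the exponential family style.

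For the moment generating function, I would start from the definition $\phi(v)\triangleq\expectation[\exp\inner{v}{x}]$ where $x\sim\mbeta(m,r)$, expand it as
\[
\phi(v)=\frac{1}{Z(m,r)}\int_x\mathcal{B}(x)^{-m}\exp\inner{v-r}{x}\dd{x},
\]
and observe that the right-hand integral is, by inspection of the definition, exactly $Z(m,r-v)$, yielding $\phi(v)=Z(m,r-v)/Z(m,r)$. This requires that $Z(m,r-v)$ be finite, which by Theorem~\ref{thm:properness} holds at least for all $v$ in a neighbourhood of the origin (the properness conditions $r{>}0$, $m{>}-1$ and, when $m{>}0$, $\sum_k\exp-\frac{r_k-v_k}{m}<1$, are all open conditions satisfied at $v{=}0$ by assumption), which is what is needed for the MGF to be defined as a germ around $0$.

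For the $n$-th moment, I would differentiate $Z(m,r)$ formally under the integral sign. Each differentiation $\partial/\partial r_k$ applied to $\exp-\sum_jr_jx_j$ brings down a factor of $-x_k$, so by induction
\[
\frac{\partial^{\sum_kn_k}Z}{\prod_k\partial^{n_k}r_k}(m,r)=\int_x\mathcal{B}(x)^{-m}\Bigl(\prod_k(-x_k)^{n_k}\Bigr)\exp-\inner{r}{x}\dd{x}=(-1)^{\sum_kn_k}Z(m,r)\,M_n,
\]
from which the stated formula follows immediately after dividing by $Z(m,r)$.

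The only non-routine point, and the one I would expect to be the main obstacle, is the justification of differentiation under the integral sign. The standard way is to produce a dominating integrable function in a neighbourhood of $r$. Here the integrand for the $n$-th derivative is $\mathcal{B}(x)^{-m}\prod_kx_k^{n_k}\exp-\inner{r}{x}$, and for any $\epsilon>0$ small enough that $r{-}\epsilon\boldsymbol{1}$ still satisfies the properness conditions of Theorem~\ref{thm:properness}, we have $\prod_kx_k^{n_k}\exp-\inner{r}{x}\leq C\exp-\inner{r-\epsilon\boldsymbol{1}}{x}$ for some constant $C$, making the integrand dominated by $C\mathcal{B}(x)^{-m}\exp-\inner{r-\epsilon\boldsymbol{1}}{x}$, which is integrable with finite integral $CZ(m,r-\epsilon\boldsymbol{1})$. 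The same domination handles uniform convergence on a neighbourhood of $r$, legitimising the exchange of differentiation and integration; an analogous argument with $\exp\inner{v}{x}$ in place of a polynomial factor justifies that $\phi$ is itself analytic in $v$ on the region where $Z(m,r-v)$ is finite, and that its Taylor coefficients at $v{=}0$ recover the moments $M_n$ consistently with the derivative formula.
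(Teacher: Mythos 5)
Your proposal is correct and follows essentially the same route as the paper: compute $\phi(v)$ by absorbing $\exp\inner{v}{x}$ into the rate parameter to get $Z(m,r-v)/Z(m,r)$, and obtain $M_n$ by differentiation (the paper differentiates $\phi$ at $v=0$, you differentiate $Z$ in $r$, which is the same computation up to the chain rule). Your added justifications — openness of the properness region around $v=0$ and the domination argument for differentiating under the integral — are sound refinements of details the paper leaves implicit.
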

\begin{proof}
By definition of the moment generating function, we have
\[
\phi(v) = \expectation_{x\sim\mbeta(m,r)}[\exp \inner{v}{x}]=\frac{1}{Z(m,r)}\int_x\mathcal{B}(x)^{-m}\exp(-\inner{r}{x}+\inner{v}{x})\dd{x} = \frac{Z(m,r-v)}{Z(m,r)}
\]
And we have the general result $M_n=\frac{\partial^{\sum_kn_k}\phi}{\prod_k\partial^{n_k}v_k}(0)$.
\end{proof}
In particular, the expectation of $\mbeta(m,r)$ is given by $(M_{\carac{h=k}_{h\in K}})_{k\in K}$, hence
\[
\expectation[\mbeta(m,r)] = -\frac{1}{Z(m,r)}(\frac{\partial Z}{\partial r_k}(m,r))_{k\in K} = -\nabla_r\log Z(m,r)
\]
\section{Numerical computation}
\subsection{Integration over the simplex}
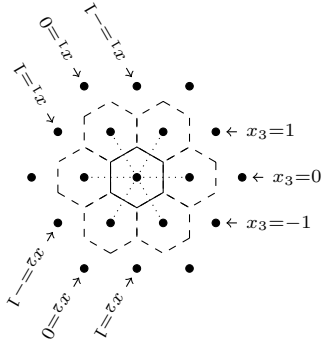
\begin{figure}
\begin{center}
\begin{tikzpicture}[scale=.7]
\begin{scope}[
v/.style={draw,fill,black,circle,inner sep=0cm,minimum size=1mm}
]
\foreach \t in {0,1,2,3,4,5} {
\begin{scope}[rotate=\t*60]
\node[v] at (1,0) {}; \node[v] at (2,0) {}; \node[v] at (1.5,{cos(30)}) {};
\draw[dotted] (0,0) -- (1,0);
\begin{scope}[shift={(1,0)}]
\draw[dashed] (30:{tan(30)}) \foreach \x in {90,150,210,270,330} { -- (\x:{tan(30)}) } -- cycle;
\end{scope}
\end{scope}
}
\node[v] at (0,0) {};
\draw (30:{tan(30)}) \foreach \x in {90,150,210,270,330} { -- (\x:{tan(30)}) } -- cycle;
\foreach \i in {3,1,2} {
\begin{scope}[rotate={\i*120}]
\foreach \t in {1,0,-1} {
\tikzmath{\x=1-cos(60*(3-\t));\y=sin(60*(3-\t));}
\draw [<-] (\x+.2,\y) -- (.4+\x,\y) node[right,rotate={\i*120}] {$\scriptstyle{x_{\i}=\t}$};
}
\end{scope}
}
\end{scope}
\end{tikzpicture}
\end{center}
\caption{\label{fig:grid}Orthogonal view of a sample of the grid $\mathcal{P}^{\mathbb{Z}}_{03}$ in the Euclidian plane $\mathcal{P}_{03}$. Here, the centre point is $000$. Its nearest neighbours in the grid, linked to it by dotted lines, are at distance $\sqrt{2}$. The Voronoi cells of the grid $\mathcal{P}^{\mathbb{Z}}_{0K}$ are regular polytopes of rank $K{-}1$ and measure $1$ (hexagons when $K{=}3$ as above, dodecahedra when $K{=}4$).}
\end{figure}
The family $(\mathcal{P}_{NK})_{N\in\mathbb{Z}}$ of parallel hyperplanes of $\mathbb{R}^K$ is defined as follows:
\[
\mathcal{P}_{NK}\triangleq\{x\in\mathbb{R}^K|\sum_kx_k=N\}
\]
Each hyperplane is equipped with the measure obtained by projection along any axis: it is proportional but not identical to its Lebesgue measure induced by Euclidian distance. It is also the measure used above in all the integrals over the simplex $\mathcal{T}_K$ (notation: $\dd{t}$).

If $A$ is a subset of $\mathbb{R}$, we let $\mathcal{P}^{(A)}_{NK}\triangleq\mathcal{P}_{NK}\cap A^K$. Thus, $\mathcal{P}^{(\mathbb{Z})}_{NK}$ forms a point grid over $\mathcal{P}_{NK}$. For each $x\in\mathcal{P}^{(\mathbb{Z})}_{NK}$, let $\nu_{NK}(x)$ be the Voronoi cell of $x$ with respect to that grid. The family $\{\nu_{NK}(x)|x\in\mathcal{P}^{(\mathbb{Z})}_{NK}\}$ forms a regular tiling of $\mathcal{P}_{NK}$, illustrated in Figure~\ref{fig:grid} when $K{=}3$ (hence $\mathcal{P}_{NK}$ is the 2D plane). For a given $K$, all the cells $\nu_{NK}(x)$ can be obtained from each other by translation, hence their measure depends neither on $x$ nor on $N$, but only on $K$, and is written $\lambda_K$ (in fact, we show below that $\lambda_K=1$). Indeed:
\begin{itemize}
\item
For any pair $x,x'\in\mathcal{P}^{(\mathbb{Z})}_{NK}$, the grid $\mathcal{P}^{(\mathbb{Z})}_{NK}$ is invariant by translation of step $x'-x$, which maps the cell $\nu_{NK}(x)$ into the cell $\nu_{NK}(x')$.
\item 
Similarly, the grid $\mathcal{P}^{(\mathbb{Z})}_{NK}$ is mapped into the grid $\mathcal{P}^{(\mathbb{Z})}_{(N+1)K}$ by translation of step $1$ along any chosen axis. Any point $x{\in}\mathcal{P}^{(\mathbb{Z})}_{NK}$, and its cell $\nu_{NK}(x)$, are mapped by that translation into a point $x'{\in}\mathcal{P}^{(\mathbb{Z})}_{(N+1)K}$, and its cell $\nu_{(N+1)K}(x')$.
\end{itemize}
Now, consider the dilatation of ratio $\frac{1}{N}$ from $\mathcal{P}_{NK}$ into $\mathcal{P}_{1K}$. It is a homeomorphism which transports the grid $\mathcal{P}^{(\mathbb{Z})}_{NK}$ of $\mathcal{P}_{NK}$ into a grid $\frac{1}{N}\mathcal{P}^{(\mathbb{Z})}_{NK}$ of $\mathcal{P}_{1K}$, which contains the grid $\mathcal{P}^{(\mathbb{Z})}_{1K}$ but is much finer. For each $x\in\mathcal{P}^{(\mathbb{Z})}_{NK}$, the cell $\nu_{NK}(x)$ of measure $\lambda_K$ is mapped into a cell $\frac{1}{N}\nu_{NK}(x)$, of measure $\frac{1}{N^{K-1}}\lambda_K$, which tends to $0$ when $N{\rightarrow}\infty$. Hence, if $f$ is any continuous mapping over $\mathcal{P}_{1K}$ with compact support, then the integral of $f$ can be obtained by
\begin{eqnarray}
\label{eqn:grid-approx}
\int_{\mathcal{P}_{1K}}f(t)\dd{t} & = &
\lim_{N\rightarrow\infty}\frac{\lambda_K}{N^{K-1}}\sum_{x\in\mathcal{P}^{(\mathbb{Z})}_{NK}} f(\frac{x}{N})
\end{eqnarray}
\begin{lemma}
\label{lem:voronoi-cell-measure}
$\lambda_K=1$ for all $K\geq1$.
\end{lemma}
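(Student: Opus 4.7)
The plan is to exploit the definition of the measure $\dd{t}$ on $\mathcal{P}_{NK}$ as the pullback of Lebesgue measure on $\mathbb{R}^{K-1}$ along the projection $\pi$ that forgets (say) the last coordinate. Under this convention, $\pi$ is by construction a measure-preserving bijection between $\mathcal{P}_{NK}$ and $\mathbb{R}^{K-1}$. Restricted to integer points, $\pi$ is moreover a bijection from $\mathcal{P}_{NK}^{(\mathbb{Z})}$ onto $\mathbb{Z}^{K-1}$, since any choice $(y_1,\dots,y_{K-1})\in\mathbb{Z}^{K-1}$ of the first $K-1$ coordinates determines the last uniquely and integrally as $N-\sum_i y_i$.

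The core observation is then that $\pi$ transports the Voronoi tiling $\{\nu_{NK}(x)\mid x\in\mathcal{P}_{NK}^{(\mathbb{Z})}\}$ of $\mathcal{P}_{NK}$ into a tiling of $\mathbb{R}^{K-1}$ indexed by $\mathbb{Z}^{K-1}$, each tile being a translate of a single fixed measurable shape $C\subset\mathbb{R}^{K-1}$ of Lebesgue measure $\lambda_K$. Indeed, the argument already given in the excerpt shows that the cells $\nu_{NK}(x)$ are all translates of one another in $\mathcal{P}_{NK}$, and translations are preserved by the affine map $\pi$; moreover, images of a tiling under a bijection still tile the target.

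The conclusion then follows from the general fact that a tiling of $\mathbb{R}^{K-1}$ by integer translates of one measurable set $C$ forces $C$ to have Lebesgue measure $1$. I would derive this by a standard density argument: since the cells have bounded diameter, the cube $[0,L]^{K-1}$ is, up to boundary contributions of order $O(L^{K-2})$, the disjoint union of the $(L+O(1))^{K-1}$ translates of $C$ whose base points lie near it, so its measure $L^{K-1}$ equals $\lambda_K\,L^{K-1}(1+o(1))$; dividing and letting $L\to\infty$ yields $\lambda_K=1$.

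I expect no substantive obstacle here; the subtleties are purely bookkeeping. One must verify carefully that the projection is indeed measure-preserving with respect to the measure $\dd{t}$ used throughout the paper, and that the Voronoi cells (defined via Euclidean distance on the hyperplane, not via $\pi$) do project to a bona fide tiling whose tiles are translates of a common shape. Once these points are checked, the asymptotic counting step is routine and does not depend on any further property of $C$ beyond the tiling condition.
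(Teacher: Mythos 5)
Your proof is correct, but it takes a genuinely different route from the paper's. The paper calibrates $\lambda_K$ against a known integral: it counts the grid points $q_{NK}=|\mathcal{P}^{(\mathbb{N})}_{NK}|=\binom{N+K-1}{K-1}$ via a generating-function recurrence, applies Equation~(\ref{eqn:grid-approx}) to the indicator of the simplex to get $\int_{\mathcal{T}}\dd{t}=\lambda_K/(K-1)!$, and compares with the known value $\int_{\mathcal{T}}\dd{t}=\mathcal{B}(1)=1/(K-1)!$. You instead argue directly from the definition of the measure: the coordinate projection defining $\dd{t}$ is measure-preserving onto $\mathbb{R}^{K-1}$, carries $\mathcal{P}^{(\mathbb{Z})}_{NK}$ bijectively onto $\mathbb{Z}^{K-1}$, and carries the Voronoi tiling onto a tiling of $\mathbb{R}^{K-1}$ by integer translates of a single set $C$ with $|C|=\lambda_K$, which forces $\lambda_K=1$. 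Your route is self-contained and local to the lemma (no binomial asymptotics or generating functions), and it sidesteps a small rigor wrinkle in the paper's version, namely applying Equation~(\ref{eqn:grid-approx}), stated for continuous compactly supported $f$, to the discontinuous indicator of $\mathcal{T}_K$ (fixable, since that indicator is Riemann integrable, but left implicit); the paper's route, in exchange, doubles as a consistency check of the very approximation formula later used for numerical integration and produces the count $q_{NK}$ needed there anyway. Two points you should make explicit to close your sketch: the grid is a translate of the full-rank lattice $\{x\in\mathbb{Z}^K:\sum_kx_k=0\}$ inside the hyperplane, so its Voronoi cells are compact and overlap only in null sets, and hence their projections genuinely tile $\mathbb{R}^{K-1}$ up to measure zero; and your final $L\to\infty$ counting can be replaced by the one-line argument that $\sum_{v\in\mathbb{Z}^{K-1}}\mathbf{1}_{C+v}=1$ almost everywhere, so integrating over $[0,1)^{K-1}$ gives $\sum_{v}|C\cap([0,1)^{K-1}-v)|=|C|=1$, with no boundary bookkeeping needed.
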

\begin{proof}
For any $N\geq0$ and $K\geq1$, let $q_{NK}\triangleq|\mathcal{P}^{(\mathbb{N})}_{NK}|$. The sequences of $\mathbb{N}^{K+1}$ summing to $N$ are in bijection with the sequences of $\mathbb{N}^K$ summing to a number between $0$ and $N$. Hence $q_{N(K+1)}=\sum_{m=0}^Nq_{mK}$. And, by definition, $q_{N1}{=}1$.

Let $F=\sum_{N\geq0,K\geq1}q_{NK}X^NY^K$ be the characteristic function of that sequence. We have, using the recurrence,
\begin{eqnarray*}
F & = &
\sum_NX^NY + \sum_{NK}\sum_{m=0}^Nq_{mK}X^NY^{K+1} =
\sum_NX^NY + \sum_{mK}q_{mK}(\sum_{N\geq m}X^N)Y^{K+1}\\
& = &
\frac{1}{1-X}Y + \sum_{mK}\frac{X^m}{1-X}q_{mK}Y^{K+1} = 
\frac{Y}{1-X}(1+F)\;\;\text{hence}\\
F & = & \frac{Y}{1-X-Y}=
Y\sum_m(X+Y)^m = Y\sum_{NK}\binom{N+K-1}{K-1}X^NY^{K-1} =
\sum_{NK}\binom{N+K-1}{K-1}X^NY^K
\end{eqnarray*}
Hence $q_{NK}=\binom{N+K-1}{K-1}$ and it is easy to see that $q_{NK}\sim\frac{N^{K-1}}{(K-1)!}$ when $N{\rightarrow}\infty$. Applying Equation~(\ref{eqn:grid-approx}) to the indicator function of the simplex $\mathcal{T}_K$, we get
\[
\int_{\mathcal{T}}\dd{t}=\lim_{N\rightarrow\infty}\frac{\lambda_K}{N^{K-1}}q_{NK}=\frac{\lambda_K}{(K-1)!}
\]
On the other hand, we have $\int_{\mathcal{T}}\dd{t}=\mathcal{B}(1)=\frac{\Gamma(1)^K}{\Gamma(K)}=\frac{1}{(K-1)!}$, hence $\lambda_K=1$.
\end{proof}
We now consider Equation~(\ref{eqn:grid-approx}) in the case where $f$ is null outside $\mathcal{T}_K$, and is factorised, i.e. there exists a family $f_{1:K}$ of scalar functions $f_k:\mathbb{R}_+\mapsto\mathbb{R}$ such that
\[
f(t) = \prod_{k=1}^Kf_k(t_k)
\]
In that case, for a given $N$, the sum in the right-hand side of Equation~(\ref{eqn:grid-approx}) becomes
\begin{eqnarray}
\label{eqn:sum-over-simplex}
S_{NK}(f_{1:K}) & \triangleq & \sum_{x\in\mathcal{P}^{(\mathbb{N})}_{NK}}\prod_{k=1}^Kf_k(\frac{x_k}{N})
\end{eqnarray}
By construction, if $\otimes$ denotes the convolution operator on vectors indexed by $0{:}N$, then
\[
S_{NK}(f_{1:K}) = \left(\bigotimes_{k=1}^K\left(f_k(\frac{n}{N})\right)_{n=0:N}\right)_N
\]
This convolution can be computed efficiently without exhaustively enumerating the grid $\mathcal{P}^{\mathbb{N}}_{NK}$. Depending on the magnitude of $N$, working with Fast Fourier Transforms may even be more efficient. In the log domain, a simple method makes use of the $\log\otimes\exp$ operator, which is associative commutative like $\otimes$, and which can be computed efficiently for any $0{:}N$-dimensional vectors $x,y$ by:
\[
\log\otimes\exp(x,y) = \log\sum{_{.*}}\exp(T(x)\dot{+}y)
\]
where $T(x)$ is the $0{:}N{\times}0{:}N$ Toeplitz matrix where $T(x)_{nm}$ is equal to  $x_{n-m}$ if $n\geq m$ and $-\infty$ otherwise, operator $\dot{+}$ adds a vector to a matrix row-wise and returns a matrix, and operator $\log\sum_{.*}\exp$ applies operator $\log\sum\exp$ to each row of a matrix and returns a vector. Operator $\log\sum\exp$ must be efficiently implemented to avoid numerical instability.
\subsection{Approximate computation of $Z(m,r)$}
For a given $s\in\mathbb{R}_+$, the integrand in the definition of $\bar{Z}(s)$ is factorised (up to a multiplicative constant):
\begin{eqnarray*}
\label{eqn:z-bar-approx}
\bar{Z}(s) & = &
\Gamma(s)^m\int_{\mathcal{T}}\prod_{k=1}^K\exp(-m\log\Gamma(st_k)+r_kst_k)\dd{t}
\end{eqnarray*}
Thus, it can be approximated using Equations~(\ref{eqn:grid-approx}) and~(\ref{eqn:sum-over-simplex}) where $f_k(u;s)=\exp(-m\log\Gamma(su)+r_ksu)$. For $N$ sufficiently large,
\begin{eqnarray}
\bar{Z}(s) & \approx &
\frac{\Gamma(s)^m}{N^{K-1}}S_{NK}(f_{1:K}(.;s))
\end{eqnarray}
Now, choose any pivot value $\rho\in\mathbb{R}_+$. Let $\text{Gam}(K,\rho)$ be the Gamma distribution with shape $K$ and rate $\rho$. we have:
\[
Z(m,r) =
\int_s\bar{Z}(s)s^{K-1}\dd{s} = \int_s\bar{Z}(s)\exp(s\rho)s^{K-1}\exp-s\rho\dd{s} =
\frac{\Gamma(K)}{\rho^K}\expectation_{s\sim\text{Gam}(K,\rho)}[\bar{Z}(s)\exp s\rho]
\]
The expectation can be approximated by the average of a sample $(s_p)_{p=1:P}$ from the distribution $\text{Gam}(K,\rho)$ for $P$ sufficiently large. Combining with Equation~(\ref{eqn:z-bar-approx}), we get:
\begin{eqnarray}
Z(m,r) & \approx &
\frac{\Gamma(K)}{P\rho^KN^{K-1}}\sum_{p=1}^PS_{NK}(f_{1:K}(.;s_p))\Gamma(s_p)^m\exp s_p\rho
\end{eqnarray}
In log scale, this becomes
\[
\log Z(m,r) \approx
\log\Gamma(K)-\log P-K\log\rho-(K-1)\log N+\log\sum_p\exp(\log S_{NK}(f_{1:K}(.;s_p))+m\log\Gamma(s_p)+\rho s_p)
\]
Moreover, for $p\in1{:}P$, let $D_p$ be the $0{:}N$ vector $(s_p\frac{n}{N})_{n=0:N}$. Then
\[
\log S_{NK}(f_{1:K}(.;s_p)) =
\left(\log\bigotimes_{k=1}^K\exp(-m\log\Gamma(D_p)+r_kD_p)\right)_N
\]
\printbibliography
\end{document}